\newcommand{\mx}{\mathcal{X}}
\newcommand{\md}{\mathcal{D}}
\newcommand{\mn}{\mathcal{N}}
\newcommand{\ms}{\mathcal{S}}
\newcommand{\mw}{\mathcal{W}}
\newcommand{\pti}{p_t^{(i)}}
\newcommand{\qti}{q_t^{(i)}}
\newcommand{\naturals}{\mathbb{N}}
\newcommand{\reals}{\mathbb{R}}
\newcommand{\inner}[1]{\left\langle #1 \right\rangle}
\newcommand{\norm}[1]{\| #1 \|}
\newcommand{\wvec}[1]{\ww^{(#1)}}
\newcommand{\uvec}[1]{\uu^{(#1)}}
\newcommand{\net}{N_{\mw}}
\newcommand{\loss}[1]{L_{#1} \left( W\right)}
\newcommand{\elm}{\boldsymbol{e}}
\DeclareMathOperator{\sign}{sign}
\newtheorem{thm}{Theorem}[section]
\newtheorem{cor}[thm]{Corollary}
\newtheorem{prop}[thm]{Proposition}
\newtheorem{lem}[thm]{Lemma}
\renewcommand{\xi}{{\xx}^{(m)}}
\newcommand{\prob}{\mathbb{P}}
\newcommand{\cB}{\mathcal{B}}
\newcommand{\needcite}[1]{}
\newcommand{\be}{\begin{equation}}
\newcommand{\ee}{\end{equation}}
\newcommand{\benn}{\begin{equation*}}
\newcommand{\eenn}{\end{equation*}}
\newcommand{\bea}{\begin{eqnarray*}}
\newcommand{\eea}{\end{eqnarray*}}
\newcommand{\bean}{\begin{eqnarray}}
\newcommand{\eean}{\end{eqnarray}}
\newcommand{\ww}{\boldsymbol{w}} 
\newcommand{\xx}{\boldsymbol{x}}
\newcommand{\yy}{\boldsymbol{y}}
\newcommand{\vv}{\boldsymbol{v}}
\newcommand{\uu}{\boldsymbol{u}}
\newcommand{\agtodo}[1]{({\textcolor{red}{#1}})}
\newcommand{\ignore}[1]{}
\newcommand{\polyring}[1]{\reals\left[x_1,\ldots,x_n\right]}
\renewcommand{\eqref}[1]{Eq.~\ref{#1}}
\newcommand{\secref}[1]{Section \ref{#1}}
\newcommand{\expect}[2]{\mathbb{E}_{#1}\left[{#2}\right]}
\newtheorem{theorem}{Theorem}
\newtheorem{lemma}[theorem]{Lemma}
\newtheorem{remark}[theorem]{Remark}
\title{SGD Learns Over-parameterized Networks that Provably Generalize on Linearly Separable Data}
\author[1]{Alon Brutzkus}
\author[1]{Amir Globerson}
\author[2]{Eran Malach}
\author[2]{Shai Shalev-Shwartz}
\affil[1]{The Blavatnik School of Computer Science, Tel Aviv University}
\affil[2]{School of Computer Science and Engineering, The Hebrew University}
\date{\today}
	\date{\vspace{-5ex}}
\begin{document}
\maketitle

\begin{abstract} 
	Neural networks exhibit good generalization behavior in the
	\textit{over-parameterized} regime, where the number of network parameters
	exceeds the number of observations. Nonetheless,
	current generalization bounds for neural networks fail to explain this
	phenomenon. In an attempt to bridge this gap, we study the problem of
	learning a two-layer over-parameterized neural network, when the data is generated by a linearly separable function. In the case where the network has Leaky
	ReLU activations, we provide both optimization and generalization guarantees for over-parameterized networks.
	Specifically, we prove convergence rates of SGD to a global
	minimum and provide generalization guarantees for this global minimum
	that are independent of the network size. 
	Therefore, our result clearly shows that the use of SGD for optimization both finds a global minimum, and avoids overfitting despite the high capacity of the model. This is the first theoretical demonstration that SGD can avoid overfitting, when learning over-specified neural network classifiers.
\end{abstract}

\section{Introduction}
\label{sec:introduction}

Neural networks have achieved remarkable performance in many machine learning tasks. Although recently there have been numerous theoretical contributions to understand their success, it is still largely unexplained and remains a mystery. In particular, it is not known why in the over-parameterized setting, in which there are far more parameters than training points, stochastic gradient descent (SGD) can learn networks that generalize well, as been observed in practice \citep{neyshabur2014search,zhang2016understanding}.

In such over-parameterized settings, the loss function can contain multiple global minima that generalize poorly. Therefore, learning can in principle lead to models with low training
error, but high test error. However, as often observed in practice, SGD is in fact able to find models with low training error {\em and} good generalization performance. This suggests that the optimization procedure, which depends on the optimization method (SGD) and the training data, introduces some form of \textit{inductive bias} which directs it towards a low complexity solution. Thus, in order to explain the success of neural networks, it is crucial to characterize this inductive bias and understand what are the guarantees for generalization of over-parameterized neural networks.

In this work, we address these problems in a binary classification setting where SGD optimizes a two-layer over-parameterized network with the goal of learning a \textit{linearly separable} function. Clearly, an over-parameterized network is not necessary for classifying linearly separable data, since this is possible with linear classifiers (e.g., with the Perceptron algorithm) which also have good generalization guarantees \citep{shalev2014understanding}. But, the key question which we address here is whether a large network will overfit
in such a case or not. As we shall see, it turns out that although the networks we consider are rich enough to considerably overfit the data, this does not happen when SGD is 
used for optimization. In other words, SGD introduces an inductive bias which allows it to learn over-parameterized networks that can generalize well. Therefore, this setting serves as a good test bed for studying the effect of over-paramaterization.

\section{Problem Formulation}
\label{sec:formulation}
Define $\mathcal{X} = \{ \xx \in \mathbb{R}^d : \|\xx\| \le 1\}$,
$\mathcal{Y}=\left\{ \pm1 \right\} $. We consider a distribution over linearly separable points. Formally, let $\md$ be a distribution over $\mathcal{X} \times \mathcal{Y}$ such that there exists $\ww^* \in \reals^d$ for which $\prob_{(\xx,y)\sim\md}(y \inner{\ww^*, \xx} \ge 1) = 1$. \footnote{This implies that $\norm{\ww^*} \geq 1$.} Let $S = \{(\xx_1,y_1), \dots, (\xx_n,y_n) \} \subseteq \mathcal{X} \times \mathcal{Y}$ be a training set sampled i.i.d. from $\md$. \footnote{Without loss of generality, we will ignore the event that $y_i \inner{\ww^*, \xx_i} < 1$ for some $i$, since this is an event of measure zero. }

Consider the following two-layer neural network, with $2k>0$ hidden units. \footnote{We have an even number of hidden neurons for ease of exposition. See the definition of $\vv$ below. } The network parameters are $W \in \reals^{2k \times d}, \vv \in \reals^{2k}$, which we denote jointly by $\mw=(W,\vv)$. The network output is given by the function $\net{}:\reals^d \rightarrow \reals$ defined as:
\begin{equation}
\label{eq:network}
\net(\xx) =  \vv^\top\sigma(W\xx)
\end{equation}
where $\sigma$ is a non-linear activation function applied element-wise. \ignore{For simplicity we consider vectors $\vv$ defined via a single scalar $v\in\reals^+$ as follows: $\vv = (\overbrace{v \dots v}^k, \overbrace{-v \dots -v}^k)$ such that $v > 0$. In our setting we fix the }

We define the empirical loss over $S$ to be the mean hinge-loss:
\[
\loss{S} = \frac{1}{n}\sum_{i=1}^{n}{\max{\{1-y_i\net(\xx_i), 0\}}}
\]
Note that for convenience of analysis, we will sometimes refer to $L_S$ as a function over a vector. Namely, for a matrix $W \in \reals^{2k \times d}$, we will consider instead its vectorized version $\vec{W} \in \reals^{2kd}$ (where the rows of $W$ are concatenated) and define, with abuse of notation, that $L_S(\vec{W}) = L_S(W)$.

In our setting we fix the second layer to be $\vv = (\overbrace{v \dots v}^k, \overbrace{-v \dots -v}^k)$ such that $v > 0$ and only learn the weight matrix $W$. We will consider only positive homogeneous activations (Leaky ReLU and ReLU) and thus the network we consider with $2k$ hidden neurons is as expressive as networks with $k$ hidden neurons and any vector $\vv$ in the second layer. \footnote{For example, consider a network with $k$ hidden neurons with positive homogeneous activations, where each hidden neuron $i$ has incoming weight vector $\ww_i$ and outgoing weight $v_i$. Then we can express this network with the network defined in \eqref{eq:network} as follows. For each $i$ such that $v_i > 0$, we define a neuron in the new network with incoming weight vector $\frac{\ww_i}{v_i}$ and outgoing weight $1$. Similarly, if $v_i < 0$, we define a neuron in the new network with incoming weight vector $\frac{\uu_i}{-v_i}$ and outgoing weight $-1$. For all other neurons we define an incoming zero weight vector. Due to the positive homogeneity, it follows that this network is equivalent to the network with $k$ hidden neurons.} Hence, we can fix the second layer without limiting the expressive power of the two-layer network. Although it is relatively simpler than the case where the second layer is not fixed, the effect of over-parameterization can be studied in this setting as well.

Hence, the objective of the optimization problem is to find:
\begin{equation}
\label{eq:optimization}
\arg\min_{W \in \reals^{2k \times d}} \loss{S}
\end{equation}
where $\min\limits_{W \in \reals^{2k \times d}} \loss{S} =0$ holds for the activations we will consider (Leaky ReLU and ReLU).

We focus on the case where $\loss{S}$ is minimized using an SGD algorithm with batch of size $1$, and where only the weights of the first layer (namely $W$) are updated. At iteration $t$, SGD randomly chooses a point $(\xx_t,y_t) \in S$ and updates the weights with a constant learning rate $\eta$. Formally, let $\mw_t = (W_t,\vv)$ be the parameters at iteration $t$, then the update at iteration $t$ is given by
\begin{equation}
\label{eq:sgd_update}
W_t = W_{t-1} - \eta \frac{\partial}{\partial W}L_{\{(\xx_t,y_t)\}}(W_{t-1}) 
\end{equation}

 We define a \textit{non-zero} update at iteration $t$ if it holds that $\frac{\partial}{\partial W}L_{\{(\xx_t,y_t)\}}(W_{t-1}) \neq 0$. Finally, we will need the following notation. For $1 \leq i \leq k$, we denote by $\wvec{i}_t \in \reals^d$ the incoming weight vector of neuron $i$ at iteration $t$. \footnote{These are the neurons with positive outgoing weight $v > 0$.}
Similarly, for $1 \leq i \leq k$ we define $\uvec{i}_t \in \reals^d$ to be the incoming weight vector of neuron $k + i$ at iteration $t$.

\section{Main Result}
We now present our main results, for the case where $\sigma$ is the Leaky ReLU function. Namely, $\sigma(z)=\max\{\alpha z,z\}$ where $0 < \alpha < 1$.

First, we show that SGD can find a global optimum of $\loss{S}$. Note that this is by no means obvious, since $\loss{S}$ is a non-convex function (see Proposition \ref{prop:loss_properties}). Specifically, we show that SGD converges to such an optimum while making at most:
\begin{equation}
M = \frac{\norm{\ww^*}^2}{\alpha^2} + O\Bigg(\frac{\norm{\ww^*}^2}{\min\{\eta,\sqrt{\eta}\}}\Bigg)
\end{equation}
\textit{non-zero} update steps (see Corollary \ref{cor:convergence}). In particular, the bound is \textit{independent} of the number of neurons $2k$. To the best of our knowledge, this is the first convergence guarantee of SGD for neural networks with the hinge loss. Furthermore, we prove a lower bound of $ \Omega\Big(\frac{\norm{\ww^*}}{\eta} + \norm{\ww^*}^2\Big)$ for the number of non-zero updates (see Theorem \ref{thm:lower_bound}). 

Next, we address the question of generalization. As noted earlier, since the network is large, it can in principle overfit. Indeed, there are parameter settings for which the network will have arbitrarily bad test error (see Section \ref{sec:expressiveness}). However, as we show here, this will not happen in our setting where SGD is used for optimization. In Theorem \ref{thm:generalization_global_min} we use a compression bound to show that the model learned by SGD will have a generalization error of $O\Big(\frac{M\log{n}}{n}\Big)$.\footnote{See discussion in Remark \ref{rem:eta_dependence} on the dependence of the generalizaion bound on $\eta$.} This implies that for {\em any} network size, given a sufficiently large number of training samples that is \textit{independent} of the network size, SGD converges to a global minimum with good generalization behaviour. This is despite the fact that for sufficiently large $k$ there are multiple global minima which overfit the training set (see Section \ref{sec:expressiveness}). This implies that SGD is biased towards solutions that can be expressed by a small set of training points and thus generalizes well.


\ignore{
Finally, we characterize the inductive bias introduced by SGD and the linearly separable data. Namely, that the cosine similarity measure between the vector of all network parameters and the $2kd$-dimensional vector $(\overbrace{\ww^* \dots \ww^*}^k, \overbrace{-\ww^* \dots -\ww^*}^k)$, is lower bounded by a function which increases in each iteration and approaches 1 as SGD makes more updates. This implies that for any network size, SGD is biased towards a solution which is the true linear classifier
}

To summarize, when the activation is the Leaky ReLU and the data is linearly separable, we provide provable guarantees of optimization, generalization and expressive power for over-parameterized networks. This allows us to provide a rigorous explanation of the performance of over-parameterized networks in this setting. This is a first step in unraveling the mystery of the success of over-parameterized networks in practice.

We further study the same over-parameterized setting where the non-linear activation is the ReLU function (i.e., $\sigma(z)=\max\{0,z\}$). Surprisingly, this case has different properties. Indeed, we show that the loss contains spurious local minima and thus the previous convergence result of SGD to a global minimum does not hold in this case. Furthermore, we show an example where over-parameterization is favorable from an optimization point of view. Namely, for a sufficiently small number of hidden neurons, SGD will converge to a \textit{local} minimum with high probability, whereas for a sufficiently large number of hidden neurons, SGD will converge to a \textit{global} minimum with high probability.

The paper is organized as follows. We discuss related work in Section \ref{sec:related_work} . In Section \ref{sec:convergence} we prove the convergence bounds, in Section \ref{sec:generalization} we give the generalization guarantees and in Section \ref{sec:relu} the results for the ReLU activation. We conclude our work in Section \ref{sec:discussion}.


\section{Related Work}
\label{sec:related_work}

The generalization performance of neural networks has been studied extensively. Earlier results \citep{anthony2009neural} provided bounds that depend on the VC dimension of the network, and the VC dimension was shown to scale linearly with the number of parameters. More recent works, study alternative notions of complexity, such as Rademacher compexity \citep{bartlett2002rademacher,neyshabur2015norm,bartlett2017spectrally, kawaguchi2017generalization}, Robustness \citep{xu2012robustness} and PAC-Bayes \citep{neyshabur2017pac}. However, all of these notions fail to explain the generalization performance of \textit{over-parameterized} networks \citep{neyshabur2017exploring}. This is because these bounds either depend on the number of parameters or on the number of hidden neurons (directly or indirectly via norms of the weights) and become loose when these quantities become sufficiently large. The main disadvantage of these approaches, is that they do not depend on the optimization method (e.g., SGD), and thus do not capture its role in the generalization performance. In our work, we give generalization guarantees based on a compression bound that follows from convergence rate guarantees of SGD, and thus take into account the effect of the optimization method on the generalization performance. This analysis results in generalization bounds that are independent of the network size and thus hold for over-parameterized networks. 

In parallel to our work, \cite{kawaguchi2017generalization} give generalization bounds for neural networks that are based on Rademacher complexity. Here too, the analysis does not take into account the optimization algorithm and the bound depends on the norm of the weights. Therefore, the bound can become vacuous for over-parameterized networks.

Stability bounds for SGD in non-convex settings were given in \citet{hardt2016gradient, kuzborskij2017data}. However, their results hold for smooth loss functions, whereas the loss function we consider is not smooth due to the non-smooth activation functions (Leaky ReLU, ReLU). 

Other works have studied generalization of neural networks in a model recovery setting, where assumptions are made on the underlying model and the input distribution \citep{brutzkus2017globally,zhong2017recovery,li2017convergence,du2017convolutional,tian2017analytical}. However, in their works the neural networks are not over-parameterized as in our setting.

\citet{soltanolkotabi2017theoretical} analyze the optimization landscape of over-parameterized networks and give convergence guarantees for gradient descent to a global minimum when the data follows a Gaussian distribution and the activation functions are differentiable. The main difference from our work is that they do not provide generalization guarantees for the resulting model. Furthermore, we do not make any assumptions on the distribution of the feature vectors.

In a recent work, \citet{nguyen2017loss} show that if training points are linearly separable then under assumptions on the rank of the weight matrices of a fully-connected neural network, every critical point of the loss function is a global minimum. Their work extends previous results in \citet{gori1992problem,frasconi1997successes,yu1995local}. Our work differs from these in several respects. First, we show global convergence guarantees of SGD, whereas they only analyze the optimization landscape, without direct implications on performance of optimization methods. Second, we provide generalization bounds and their focus is solely on optimization. Third, we consider non-differentiable activation functions (Leaky ReLU, ReLU) while their results hold only for continuously differentiable activation functions.

\section{Convergence Analysis}
\label{sec:convergence}
In this section we consider the setting of \secref{sec:formulation} with a leaky ReLU activation function. In \secref{sec:upper_bound} we show SGD will converge to a globally optimal solution, and analyze the rate of convergence. In \secref{sec:upper_bound} we also provide lower bounds on the rate of convergence. The results in this section are interesting for two reasons. First, they show convergence of SGD for a non-convex objective. Second, the rate of convergence results will be used to derive generalization bounds in \secref{sec:generalization}.

\subsection{Upper Bound}
\label{sec:upper_bound}

Before proving convergence of SGD to a global minimum, we show that every critical point is a global minimum and the loss function is non-convex. The proof is deferred to the appendix.

\begin{prop}
	\label{prop:loss_properties}
	$\loss{S}$ satisfies the following properties: 1) Every critical point is a global minimum. 2) It is non-convex.
	\ignore{
		\begin{enumerate}
			\item Every critical point is a global minimum.
			\item It is non-convex.
		\end{enumerate}
	}
\end{prop}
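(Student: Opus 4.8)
\emph{Part (1).} The plan is to prove the contrapositive: if $L_S(W) > 0$ then $W$ is not a critical point. Combined with $L_S \ge 0$ and the fact (noted in the excerpt) that $\min_W L_S(W) = 0$ for Leaky ReLU, this forces every critical point to attain the value $0$ and hence to be a global minimum. The key device is to test the gradient against the separator $\ww^*$. Fix any hidden neuron whose outgoing weight is $v > 0$, write $\ww$ for its incoming weight vector, and let $I = \{\, i : y_i\net(\xx_i) < 1 \,\}$ index the examples on which the hinge is strictly active. Then the corresponding block of the (sub)gradient is
\[
\nabla_{\ww} L_S(W) \;=\; -\frac{v}{n}\sum_{i \in I} y_i\,\sigma'(\inner{\ww,\xx_i})\,\xx_i ,
\]
which at the measure-zero kinks of $\sigma$ and of the hinge should be read via the Clarke subdifferential. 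Taking the inner product with $\ww^*$ and using $y_i\inner{\ww^*,\xx_i} \ge 1$ for every $i$, that every subgradient of Leaky ReLU lies in $[\alpha,1]$ (in particular $\sigma'(\cdot) \ge \alpha > 0$), and that $v > 0$, each summand is at most $-v\alpha$, so
\[
\inner{\nabla_{\ww} L_S(W),\,\ww^*} \;\le\; -\frac{v\alpha\,|I|}{n} .
\]
Since $L_S(W) > 0$ forces $I \ne \emptyset$, this is strictly negative, so $\nabla_{\ww} L_S(W) \ne 0$; more robustly, $0 \notin \partial^{\circ} L_S(W)$.

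\emph{Part (2).} For non-convexity I would exhibit a small instance. Take $\ww^* = \elm_1$ and the one-example training set $S = \{(\elm_1, 1)\}$, which satisfies the separability assumption since $\inner{\ww^*, \elm_1} = 1$. Restrict $L_S$ to the line $W(t)$ in parameter space on which the incoming weight vector of the first neuron with outgoing weight $v$ equals $t\,\elm_1$, the incoming weight vector of the first neuron with outgoing weight $-v$ is pinned to $\elm_1$, and every other incoming weight vector is $\zero$. A direct substitution gives $\net(\elm_1) = v\sigma(t) - v$ at $W(t)$, hence $L_S(W(t)) = \max\{1 + v - v\sigma(t),\, 0\}$. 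Since $W(0) = \tfrac12 W(-1) + \tfrac12 W(1)$, convexity of $L_S$ would require $L_S(W(0)) \le \tfrac12 L_S(W(-1)) + \tfrac12 L_S(W(1))$; but using $\sigma(-1) = -\alpha$, $\sigma(0) = 0$ and $\sigma(1) = 1$,
\[
L_S(W(0)) = 1 + v \;>\; 1 + \frac{v(1+\alpha)}{2} \;=\; \frac12 L_S(W(-1)) + \frac12 L_S(W(1)) ,
\]
the strict inequality holding precisely because $v > 0$ and $\alpha < 1$. Hence $L_S$ is non-convex.

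\emph{Main obstacle.} The delicate point is the non-smoothness of both $\sigma$ and the hinge: ``critical point'' must be interpreted through the Clarke subdifferential, and one must check that the sign computation in Part (1) is unaffected at the kinks. It is, for two reasons: the subgradient of Leaky ReLU is uniformly contained in $[\alpha,1] \subset (0,\infty)$, and $L_S(W) > 0$ guarantees at least one \emph{strictly} active hinge term, whose contribution to $\inner{\nabla_{\ww} L_S(W), \ww^*}$ is bounded away from $0$ by $-v\alpha$ no matter how the remaining borderline terms are resolved. Everything else is routine.
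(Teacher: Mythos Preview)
Your proof is correct and follows essentially the same approach as the paper. For Part~(1), both you and the paper test the (sub)gradient against the separator direction: the paper pairs the full vector $\vec{W}^* = (\ww^*,\ldots,\ww^*,-\ww^*,\ldots,-\ww^*)$ with the full gradient, whereas you project onto a single $\ww^*$-block; the content is identical, and your explicit handling of the Clarke subdifferential at the kinks is more careful than the paper, which simply writes $\sigma'$ without comment. For Part~(2), both arguments exhibit three collinear parameter configurations violating the midpoint inequality; the paper works with the bare network output $f_{\xx}(\ww,\uu)=\sigma(\inner{\ww,\xx})-\sigma(\inner{\uu,\xx})$ and varies $\uu$ between $\xx$ and $-\xx$, while you work directly with $L_S$ along a one-parameter line and vary a $\ww$-neuron --- your version is arguably cleaner since it addresses the loss itself rather than the network output. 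One small point: you fix a particular one-point training set $S$, whereas the proposition is stated for the generic $S$ of Section~\ref{sec:formulation}; this is harmless since your construction goes through verbatim with any $(\xx_1,y_1)\in S$ in place of $(\elm_1,1)$ (set all other neurons to $\zero$ so the remaining samples contribute constants along your line), and the paper's proof is equally informal on this score.
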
 

Let $\vec{W}_t = (\wvec{1}_t \dots \wvec{k}_t \uvec{1}_t \dots \uvec{k}_t ) \in \reals^{2kd}$ be the vectorized version of $W_t$ and $N_t := N_{\mw_t}$ where $\mw_t=(W_t,\vv)$ (see \eqref{eq:network}). Since we will show an upper bound on the number of non-zero updates, we will assume for simplicity that for all $t$ we have a non-zero update at iteration $t$.

We assume that SGD is initialized such that the norms of all rows of $W_0$ are upper bounded by some 
constant $R>0$. Namely for all $1\leq i \leq k$ it holds that: 
\begin{equation}
\label{eq:initialization}
\norm{\wvec{i}_0}, \norm{\uvec{i}_0} \le R
\end{equation}
Define $M_k := \frac{\norm{\ww^*}^2}{\alpha^2}   + \frac{\norm{\ww^*}^2}{k\eta v^2\alpha^2} + \frac{\sqrt{R(8k^2 \eta^2 v^2+ 8\eta k)}{\norm{\ww^*}}^{1.5}}{2k (\eta v\alpha)^{1.5}} +\frac{2R\norm{\ww^*}}{\eta v\alpha}$. We  give an upper bound on the number of non-zero updates SGD makes until convergence to a critical point (which is a global minimum by Proposition \ref{prop:loss_properties}). The result is summarized in the following theorem. 
\begin{theorem}\label{thm:convergence}
	SGD converges to a global minimum after performing at most 
	$M_k$ non-zero updates.
\end{theorem}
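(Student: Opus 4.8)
## Proof Proposal

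The plan is to adapt the classical Perceptron convergence argument to this over-parameterized setting, tracking the inner product between the vectorized weights $\vec{W}_t$ and a fixed reference vector built from $\ww^*$. First I would define the target vector $\vec{W}^* = (\ww^* \dots \ww^* \; {-}\ww^* \dots {-}\ww^*) \in \reals^{2kd}$ (with $k$ copies of $\ww^*$ for the positive-$v$ neurons and $k$ copies of $-\ww^*$ for the negative-$v$ neurons), possibly rescaled by a constant depending on $v, \alpha, k$ to be tuned at the end. The SGD update on a non-zero step at point $(\xx_t, y_t)$ changes row $\ww^{(i)}$ by $+\eta v y_t \sigma'(\inner{\ww_t^{(i)},\xx_t}) \xx_t$ and row $\uu^{(i)}$ by $-\eta v y_t \sigma'(\inner{\uu_t^{(i)},\xx_t}) \xx_t$, where $\sigma' \in \{\alpha, 1\}$ because $\sigma$ is Leaky ReLU. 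The key structural fact I would exploit is positive homogeneity together with the fact that $\sigma'$ is always at least $\alpha$: this gives a uniform \emph{lower bound} on the progress term $\inner{\vec{W}_t - \vec{W}_{t-1}, \vec{W}^*}$ on every non-zero step, regardless of the activation pattern. Concretely, $\inner{\vec{W}_t - \vec{W}_{t-1}, \vec{W}^*} = \eta v \sum_i \left( \sigma'(\inner{\ww_t^{(i)},\xx_t}) + \sigma'(\inner{\uu_t^{(i)},\xx_t}) \right) y_t \inner{\ww^*, \xx_t} \ge 2 k \eta v \alpha \cdot y_t\inner{\ww^*,\xx_t} \ge 2k\eta v\alpha$, using the margin assumption $y_t\inner{\ww^*,\xx_t}\ge 1$ and that a non-zero update means the hinge is active so $y_t N_{t-1}(\xx_t) < 1$ (though for the lower bound on the numerator I only need $\sigma'\ge\alpha$, not the margin on $N$). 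Summing over $T$ non-zero updates gives a linear-in-$T$ lower bound on $\inner{\vec{W}_T,\vec{W}^*}$, up to the initialization term which is controlled by \eqref{eq:initialization} and Cauchy--Schwarz.

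Next I would upper-bound the growth of $\norm{\vec{W}_t}^2$. Here is where the hinge loss (rather than 0/1 loss) and the sub-quadratic term in $M_k$ enter. On a non-zero step, $\norm{\vec{W}_t}^2 - \norm{\vec{W}_{t-1}}^2 = 2\inner{\vec{W}_t-\vec{W}_{t-1},\vec{W}_{t-1}} + \norm{\vec{W}_t-\vec{W}_{t-1}}^2$. The second term is at most $\eta^2 v^2 \cdot 2k \cdot \norm{\xx_t}^2 \le 2k\eta^2 v^2$ since each of the $2k$ rows moves by at most $\eta v \norm{\xx_t} \le \eta v$ and $\sigma'\le 1$. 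The cross term is the delicate one: $2\inner{\vec{W}_t-\vec{W}_{t-1},\vec{W}_{t-1}} = 2\eta v y_t \sum_i \left(\sigma'(\cdot)\inner{\ww_{t-1}^{(i)},\xx_t} - \sigma'(\cdot)\inner{\uu_{t-1}^{(i)},\xx_t}\right)$; using $\sigma'(z)z = \sigma(z)$ for positive-homogeneous $\sigma$, this is exactly $2\eta\, y_t\, v^\top\sigma(W_{t-1}\xx_t) = 2\eta\, y_t N_{t-1}(\xx_t) \le 2\eta$ because a non-zero update means $y_t N_{t-1}(\xx_t) < 1$. So each non-zero step increases $\norm{\vec{W}_t}^2$ by at most $2\eta + 2k\eta^2v^2$, giving $\norm{\vec{W}_T}^2 \le \norm{\vec{W}_0}^2 + T(2\eta + 2k\eta^2 v^2) \le 2kR^2 + T(2\eta+2k\eta^2v^2)$.

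Then I would combine: by Cauchy--Schwarz, $\inner{\vec{W}_T,\vec{W}^*} \le \norm{\vec{W}_T}\norm{\vec{W}^*}$. The lower bound scales like $2k\eta v\alpha\, T$ minus an $O(\sqrt{k}R\norm{\ww^*})$ initialization correction, while $\norm{\vec{W}^*} = \sqrt{2k}\,\norm{\ww^*}$ and $\norm{\vec{W}_T} \lesssim \sqrt{2kR^2 + T(2\eta+2k\eta^2v^2)}$. Squaring the resulting inequality $2k\eta v\alpha T - C_0 \le \sqrt{2k}\norm{\ww^*}\sqrt{2kR^2 + T(2\eta+2k\eta^2 v^2)}$ yields a quadratic inequality in $T$ (the left side quadratic, the right side linear in $T$ under the root, so overall a quadratic after squaring), whose largest root is an upper bound on the number of non-zero updates. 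Solving it and simplifying with $\sqrt{a+b}\le\sqrt a+\sqrt b$ should reproduce the four terms of $M_k$: the leading $\norm{\ww^*}^2/\alpha^2$ from the dominant balance $2k\eta v\alpha T \approx \sqrt{2k}\norm{\ww^*}\sqrt{2k\eta^2 v^2 T}$, the $\norm{\ww^*}^2/(k\eta v^2\alpha^2)$ term from the $2\eta T$ contribution under the root, and the remaining two terms from the initialization offset $C_0$ and the $2kR^2$ term.

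The main obstacle is the bookkeeping in the final quadratic-inequality solve: one must carefully carry the initialization constant $C_0 = O(\sqrt k R\norm{\ww^*})$ through the squaring and then extract a clean closed-form upper bound on $T$ that matches $M_k$ termwise, using elementary inequalities like $\sqrt{x+y}\le\sqrt x+\sqrt y$ and $(a+b)^2\le 2a^2+2b^2$ without losing constants badly. A secondary subtlety is making sure the ``non-zero update'' condition is used correctly in both directions — it is needed to bound the cross term $y_t N_{t-1}(\xx_t) < 1$ from above in the norm argument, but it is \emph{not} needed (only $\sigma'\ge\alpha$ and the margin) for the progress lower bound, so the linear-growth lower bound is genuinely uniform over all non-zero steps. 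Once $T \le M_k$ is established, convergence to a critical point — hence to a global minimum by Proposition~\ref{prop:loss_properties} — follows since after finitely many non-zero updates SGD can only perform zero updates, meaning the gradient vanishes on every sampled point.
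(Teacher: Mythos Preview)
Your proposal is correct and essentially identical to the paper's argument: same target vector $\vec{W}^*$, same linear lower bound on $\inner{\vec{W}_T,\vec{W}^*}$ via $\sigma'\ge\alpha$ and the margin, same upper bound on $\|\vec{W}_T\|^2$ via the positive-homogeneity identity $\sigma'(z)z=\sigma(z)$ together with $y_tN_{t-1}(\xx_t)<1$, and the same Cauchy--Schwarz combination followed by solving the resulting quadratic in $T$. One minor bookkeeping slip: the initialization correction $C_0$ is $O(kR\|\ww^*\|)$, not $O(\sqrt{k}R\|\ww^*\|)$, since $|\inner{\vec W_0,\vec W^*}|\le\|\vec W_0\|\,\|\vec W^*\|\le\sqrt{2k}\,R\cdot\sqrt{2k}\,\|\ww^*\|=2kR\|\ww^*\|$; with that fix the four terms of $M_k$ fall out exactly as you describe.
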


We will briefly sketch the proof of Theorem \ref{thm:convergence}. The full proof is deferred to the Appendix (see Section \ref{sec:thm_convergence_proof}). The analysis is reminiscent of the Perceptron convergence proof (e.g. in \citet{shalev2014understanding}), but with key modifications due to the non-linear architecture. Concretely, assume SGD performed $t$ non-zero updates. We consider the vector $\vec{W_t}$ and the vector $\vec{W}^* = (\overbrace{\ww^* \dots \ww^*}^k, \overbrace{-\ww^* \dots -\ww^*}^k) \in \reals^{2kd}$ which is a global minimum of $L_S$. We define $F(W_t) = \inner{\vec{W}_t, \vec{W}^*}$ and  $G(W_t) = \norm{\vec{W}_t}$. Then, we give an upper bound on $G(W_t)$ in terms of $G(W_{t-1})$ and by a recursive application of inequalities we show that $G(W_t)$ is bounded from above by a square root of a linear function of $t$. Similarly, by a recursive application of inequalities, we show that $F(W_t)$ is bounded from 
below by a linear function of $t$. Finally, we use the Cauchy-Schwartz inequality $\frac{\left|F(W_t)\right|}{G(W_t)\norm{\vec{W}^*}} \leq 1$ to show that $t \leq M_k$.

To obtain a simpler bound than the one obtained in Theorem \ref{thm:convergence}, we use the fact that we can set $R,v$ arbitrarily, and choose:\footnote{This initialization resembles other initializations that are used in practice \citep{bengio2012practical,glorot2010understanding}} 
\begin{equation}
\label{eq:nn_init}
R =v=\frac{1}{\sqrt{2k}} \text{.} 
\end{equation}
Then by Theorem \ref{thm:convergence} we get the following. The derivation is given in the Appendix (Section \ref{sec:cor_convergence_proof}).
\begin{cor}
	\label{cor:convergence}
	Let $R =v=\frac{1}{\sqrt{2k}}$, then SGD converges to a global minimum after perfoming at most $M_k = \frac{\norm{\ww^*}^2}{\alpha^2}  
	+ O\Bigg( \frac{\norm{\ww^*}^2}{\min\{\eta,\sqrt{\eta}\}}\Bigg)$ non-zero updates.
\end{cor}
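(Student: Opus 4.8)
The plan is simply to substitute the concrete choice $R = v = \frac{1}{\sqrt{2k}}$ from \eqref{eq:nn_init} into the four–term expression defining $M_k$ (stated just before Theorem \ref{thm:convergence}), simplify each term so that all the powers of $k$ cancel, and then absorb everything except the leading term $\frac{\norm{\ww^*}^2}{\alpha^2}$ into the $O(\cdot)$ using $\norm{\ww^*} \geq 1$ and a case split on $\eta$. So this is a bookkeeping derivation from Theorem \ref{thm:convergence}, not a new argument.

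First I would dispatch the easy terms. The first term $\frac{\norm{\ww^*}^2}{\alpha^2}$ is untouched. For the second term, $k v^2 = \tfrac12$, hence $\frac{\norm{\ww^*}^2}{k\eta v^2\alpha^2} = \frac{2\norm{\ww^*}^2}{\eta\alpha^2}$. For the fourth term, $R=v$ gives $\frac{R}{v}=1$, so $\frac{2R\norm{\ww^*}}{\eta v\alpha} = \frac{2\norm{\ww^*}}{\eta\alpha} \leq \frac{2\norm{\ww^*}^2}{\eta\alpha}$, the last step using $\norm{\ww^*}\geq 1$.

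The only term needing algebra is the third. I would compute $8k^2\eta^2 v^2 + 8\eta k = 4k\eta^2 + 8\eta k = 4k\eta(\eta+2)$, so $R\bigl(8k^2\eta^2 v^2 + 8\eta k\bigr) = \frac{4k\eta(\eta+2)}{\sqrt{2k}} = 2\sqrt{2}\,\sqrt{k}\,\eta(\eta+2)$, whence $\sqrt{R(\cdots)} = 2^{3/4}k^{1/4}\sqrt{\eta(\eta+2)}$. For the denominator, $v^{1.5} = (2k)^{-3/4}$, so $2k(\eta v\alpha)^{1.5} = 2k\,(2k)^{-3/4}\eta^{3/2}\alpha^{3/2} = 2^{1/4}k^{1/4}\eta^{3/2}\alpha^{3/2}$. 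The factors $k^{1/4}$ cancel between numerator and denominator, leaving the third term equal to $\frac{\sqrt{2(\eta+2)}}{\eta\alpha^{3/2}}\,\norm{\ww^*}^{1.5} \leq \frac{\sqrt{2(\eta+2)}}{\eta\alpha^{3/2}}\,\norm{\ww^*}^{2}$, again by $\norm{\ww^*}\geq 1$. In particular the bound is now independent of $k$.

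Finally I would convert $\frac{1}{\eta}$ and $\frac{\sqrt{\eta+2}}{\eta}$ into $O\!\left(\frac{1}{\min\{\eta,\sqrt{\eta}\}}\right)$ by cases: if $\eta\leq 1$ then $\min\{\eta,\sqrt\eta\}=\eta$ and $\sqrt{\eta+2}\leq\sqrt3$, so both are $O(1/\eta)$; if $\eta\geq 1$ then $\min\{\eta,\sqrt\eta\}=\sqrt\eta$, $\frac{1}{\eta}\leq\frac{1}{\sqrt\eta}$, and $\sqrt{\eta+2}\leq\sqrt{3\eta}$, so $\frac{\sqrt{\eta+2}}{\eta}\leq\frac{\sqrt3}{\sqrt\eta}$. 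Treating the constant $\alpha$-dependence as hidden in the $O$, terms two through four are each $O\!\left(\frac{\norm{\ww^*}^2}{\min\{\eta,\sqrt\eta\}}\right)$, which yields the claimed $M_k = \frac{\norm{\ww^*}^2}{\alpha^2} + O\!\left(\frac{\norm{\ww^*}^2}{\min\{\eta,\sqrt\eta\}}\right)$. There is no real obstacle; the only points requiring care are checking that the fractional powers of $k$ in the third term cancel exactly and handling $\min\{\eta,\sqrt\eta\}$ through the two-case comparison.
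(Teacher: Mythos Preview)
Your proposal is correct and follows essentially the same substitute-and-simplify approach as the paper's own derivation. The only cosmetic difference is that the paper splits the third term via $\sqrt{a+b}\le\sqrt a+\sqrt b$ into separate $O(\norm{\ww^*}^{1.5}/\sqrt\eta)$ and $O(\norm{\ww^*}^{1.5}/\eta)$ pieces, whereas you keep it intact and do an explicit $\eta\lessgtr 1$ case split at the end; both routes are routine.
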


Thus the bound consists of two terms, the first which only depends on the margin (via $\norm{\ww^*}$) and the second which scales inversely with $\eta$. More importantly, the bound is independent of the network size. 

\subsection{Lower Bound}
\label{sec:lower_bound}

We use the same notations as in Section \ref{sec:upper_bound}. The lower bound is given in the following theorem, which is proved in the Appendix (Section \ref{sec:proof_lower_bound}). 

\begin{theorem}
	\label{thm:lower_bound}
	Assume SGD is initialized according to \eqref{eq:nn_init}, then for any $d$ there exists a sequence of linearly separable points on which SGD will make at least $\Omega\Big(\frac{\norm{\ww^*}}{\eta} + \norm{\ww^*}^2\Big)$ mistakes.
\end{theorem}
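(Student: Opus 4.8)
\textbf{Proof proposal for Theorem \ref{thm:lower_bound} (lower bound on non-zero updates).}

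The plan is to exhibit, for each dimension $d$, an explicit sequence of linearly separable training points on which SGD (with the initialization $R = v = \frac{1}{\sqrt{2k}}$) is forced to make many non-zero updates, and to lower bound that number by $\Omega\big(\frac{\norm{\ww^*}}{\eta} + \norm{\ww^*}^2\big)$. Since the theorem only asks for \emph{existence} of a bad sequence, the simplest route is to work in $d = 1$ (and embed trivially into higher $d$), taking all points to be the single example $(x,y) = (1,1)$ presented repeatedly; then $\ww^* \in \reals$ can be taken as any scalar with $\ww^* \ge 1$, and $\norm{\ww^*}$ is just that scalar. The two terms in the bound suggest proving two separate lower bounds and then combining them: one term coming from the fact that each update changes every weight coordinate by at most $O(\eta v)$, so it takes many updates to build up weights large enough to classify the point with margin $1$; and the other term, the $\norm{\ww^*}^2$ term, mirroring the classical Perceptron-style lower bound where the squared norm of the target governs the number of mistakes.

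Concretely, I would first compute the SGD dynamics on the fixed point $(1,1)$. As long as the current network output $N_t(1) = \vv^\top \sigma(W_t \cdot 1) < 1$, the hinge loss has a nonzero subgradient and SGD performs a non-zero update; the $i$-th weight (scalar) $w_i$ is updated by $+\eta v \sigma'(w_i)$ where $\sigma'(w_i) \in \{\alpha, 1\}$, and likewise the negative-side weights $u_i$ move by $-\eta v \sigma'(u_i)$. Because of the positive-homogeneity and the fixed sign pattern of $\vv$, one can track $N_t(1)$ and show it increases by at most some quantity of order $\eta v^2 k \cdot (\text{bounded factor})$ per step, while it starts at $N_0(1) = O(Rv\sqrt{k}) = O(1)$ with our initialization. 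Since we need $N_t(1) \ge 1$ to stop, and each step can only add $O(\eta)$ (after plugging $v = 1/\sqrt{2k}$ so that $v^2 k = 1/2$), the number of non-zero updates is at least $\Omega(\frac{1}{\eta})$; scaling the target to a general $\norm{\ww^*}$ (equivalently rescaling the point, or noting the margin requirement becomes $y\inner{\ww^*,\xx}\ge 1$ with $\norm{\ww^*}$ large forces a correspondingly larger weight buildup) yields the $\Omega(\frac{\norm{\ww^*}}{\eta})$ term. For the $\norm{\ww^*}^2$ term, I would instead mimic the Perceptron lower bound: track $\inner{\vec{W}_t, \vec{W}^*}$, which grows by a bounded amount per step, against $\norm{\vec{W}_t}$, and use Cauchy--Schwarz in the reverse direction — i.e., construct the point sequence so that the updates are forced to be "aligned" enough that $\norm{\vec{W}_t}$ cannot grow faster than $\sqrt{t}$ while the inner product with $\vec{W}^*$ grows linearly, which caps $t$ from below by $\Omega(\norm{\ww^*}^2)$ once we demand the final classifier separates the data. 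Taking the max of the two constructions (or a single construction that simultaneously exhibits both effects) gives the sum.

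The main obstacle I anticipate is controlling the interaction between the $2k$ hidden neurons: with Leaky ReLU, every neuron is "active" (derivative $\alpha$ or $1$, never zero), so all $2k$ weight vectors move on every non-zero update, and their contributions to $N_t(\xx)$ partially cancel between the $+v$ and $-v$ groups. I need to argue that this cancellation cannot be exploited to make $N_t(\xx)$ jump to $\ge 1$ in very few steps — that is, I must lower-bound the number of steps uniformly in $k$. The clean way is to pick the point sequence (and if needed use more than one distinct point, e.g. a point and a near-antipodal point) so that the relevant projection of $\vec{W}_t$ onto the "useful" direction grows by a controlled $\Theta(\eta)$ per step regardless of how the individual neurons are arranged, using the fixed sign structure of $\vv$ and homogeneity; then the per-step increment bound and the initial-value bound combine directly. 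I would also need to double-check that the constant learning rate and the $R=v=1/\sqrt{2k}$ initialization are consistent with the claimed bound's lack of $k$-dependence — which they are, precisely because $v^2 k$ and $Rv\sqrt{k}$ are $\Theta(1)$ under this choice.
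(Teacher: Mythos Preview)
Your plan has a genuine gap: a single repeated point in $d=1$ cannot deliver the $\eta$-independent $\Omega(\norm{\ww^*}^2)$ term. With one point $(x,1)$ and large $\eta$, a single non-zero update already moves every weight by $\Theta(\eta v |x|)$ and can push $N_t(x)$ past $1$ in $O(1)$ steps; more precisely, the per-step increase in $N_t(x)$ is $\Theta(\eta v^2 k x^2)=\Theta(\eta x^2)$ under the initialization \eqref{eq:nn_init}, so the number of non-zero updates is $\Theta\big(\max\{1,\norm{\ww^*}^2/\eta\}\big)$ with $\norm{\ww^*}=1/|x|$. This matches $\Omega(\norm{\ww^*}/\eta)$ but collapses to $O(1)$ as $\eta\to\infty$, whereas the theorem demands $\Omega(\norm{\ww^*}^2)$ uniformly in $\eta$. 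Your ``reverse Cauchy--Schwarz'' idea does not repair this: Cauchy--Schwarz bounds $t$ from above in the convergence proof, and there is no mechanism by which it bounds $t$ from below on a one-point instance. Relatedly, $\norm{\ww^*}$ is not a free parameter you may set large at will; it is (at best) the inverse margin of the data, so padding a $1$-dimensional example into $\reals^d$ still leaves the minimal $\norm{\ww^*}$ equal to $1$, and choosing a non-minimal $\ww^*$ would make the asserted lower bound false rather than easier.

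The paper's construction is different in exactly the place where yours breaks. It takes the $d$ standard basis vectors $\elm_1,\ldots,\elm_d$, all labeled $+1$, with $\ww^*=(1,\ldots,1)$ so that $\norm{\ww^*}^2=d$; orthogonality ensures an update on $\elm_j$ does not change any $\inner{\wvec{i}_t,\elm_\ell}$ for $\ell\neq j$, hence every one of the $d$ points must receive at least one non-zero update before it is classified with margin $1$. That alone gives $\Omega(\norm{\ww^*}^2)$, independently of $\eta$. The $\Omega(\norm{\ww^*}/\eta)$ term then comes from a specific adversarial initialization ($\wvec{i}_0=-\tfrac{R}{\sqrt{d}}\mathbf{1}$, $\uvec{i}_0=+\tfrac{R}{\sqrt{d}}\mathbf{1}$, which makes all neurons identical and lets one track a single scalar coordinate) together with a count of how many cycles through the $d$ points are needed to move each coordinate from $-R/\sqrt{d}$ up to the threshold $\Theta(1/(kv))$. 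If you want to salvage your approach, you must abandon the single-point instance and use at least $\Omega(\norm{\ww^*}^2)$ distinct points whose updates do not help one another---which is essentially the orthogonal construction.
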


Although this lower bound is not tight, it does show that the upper bound in Corollary \ref{cor:convergence} cannot be much improved. Furthermore, the example presented in the proof of Theorem \ref{thm:lower_bound}, demonstrates that $\eta \rightarrow \infty$ can be optimal in terms of optimization and generalization, i.e., SGD makes the minimum number of updates ($\norm{\ww^*}^2$) and the learned model is equivalent to the true classifier $\ww^*$. We will use this observation in the discussion on the dependence of the generalization bound in Theorem \ref{thm:generalization_global_min} on $\eta$ (see Remark \ref{rem:eta_dependence}). 

\section{Generalization}
\label{sec:generalization}

In this section we give generalization guarantees for SGD learning of over-parameterized networks with Leaky ReLU activations. These results are obtained by combining  Theorem \ref{thm:convergence} with a compression generalization bound  (see Section \ref{sec:compression}). In Section \ref{sec:expressiveness} we show that over-parameterized networks are sufficiently expressive to contain global minima that overfit the training set. Taken together, these results show that although there are models that overfit, SGD effectively avoids these, and finds the models that generalize well.  

\subsection{Compression Bound}
\label{sec:compression}

Given the bound in Theorem \ref{thm:convergence} we can invoke compression bounds for generalization guarantees with respect to the $0$-$1$ loss \citep{littlestone1986relating} . Denote by $N_k$ a two-layer neural network with $2k$ hidden neurons defined in Section \ref{sec:introduction} where $\sigma$ is the Leaky ReLU. Let $SGD_k(S,W_0)$ be the output of running SGD for training this network on a set $S$ and initialized with $W_0$ that satisfies \eqref{eq:initialization}. Define $\mathcal{H}_k$ to be the set of all possible hypotheses that $SGD_k(S,W_0)$ can output for \textit{any} $S$ and $W_0$ which satisfies \eqref{eq:initialization}.

Now, fix an initialization $W_0$. Then the key observation is that by Theorem $\ref{thm:convergence}$ we have $SGD_k(S,W_0)=B_{W_0}(\xx_{i_1},...,\xx_{i_{c_k}})$  for $c_k \leq M_k$, some function $B_{W_0}:\mathcal{X}^{c_k}\rightarrow \mathcal{H}_k$ and $(i_1,...,i_{c_k}) \in [n]^{c_k}$.\footnote{We use a subscript $W_0$ because the function is determined by $W_0$.} Equivalently, $SGD_k(\cdot,W_0)$ and $B_{W_0}$ define a compression scheme of size $c_k$ for hypothesis class $\mathcal{H}_k$ (see Definition 30.4 in \cite{shalev2014understanding}). Denote by $V=\{\xx_j : j \notin \{i_1,...,i_{c_k}\}\}$ the set of examples which were not selected to define $SGD_k(S,W_0)$. Let $L_{\mathcal{D}}^{0-1}(SGD_k(S,W_0))$ and $L_V^{0-1}(SGD_k(S,W_0))$ be the true risk of $SGD_k(S,W_0)$ and empirical risk of $SGD_k(S,W_0)$ on the set $V$, respectively. Then by Theorem 30.2 and Corollary 30.3 in \cite{shalev2014understanding} we can easily derive the following theorem. The proof is deferred to the Appendix (Section \ref{sec:proof_thm_generalization}).
\ignore{
\begin{theorem}
	\label{thm:generalization}
	Let $n \geq 2c_k$, then with probability of at least $1-\delta$ over the choice of $S$ we have 
	$$L_{\mathcal{D}}(SGD_k(S,W_0)) \leq L_{V}(SGD_k(S,W_0)) + \sqrt{L_{V}(SGD_k(S,W_0)) \frac{4c_k\log{\frac{n}{\delta}}}{n}} + \frac{8c_k\log{\frac{n}{\delta}}}{n}$$
	
	Furthermore, assuming $L_{V}(SGD_k(S,W_0)) = 0$ we have:\agtodo{We can skip this part in this corollary and the next, and just use it directly in Theorem 4..}
	
	$$L_{\mathcal{D}}(SGD_k(S,W_0)) \leq  \frac{8c_k\log{\frac{n}{\delta}}}{n}$$
\end{theorem} 

Theorem \ref{thm:generalization} holds for a \textit{fixed} initialization $W_0$. The following corollary shows that the same result holds with high probability over $S$ \textit{and} $W_0$, where $W_0$ is chosen independently of $S$ and satisfies \eqref{eq:initialization}. The proof is deferred to the Appendix (Section \ref{sec:proof_cor_generalization}).
}
\begin{theorem}
	\label{thm:leaky_generalization}
	Let $n \geq 2c_k$, then with probability of at least $1-\delta$ over the choice of $S$ and $W_0$ we have 
	$$L_{\mathcal{D}}^{0-1}(SGD_k(S,W_0)) \leq L_{V}^{0-1}(SGD_k(S,W_0)) + \sqrt{L_{V}^{0-1}(SGD_k(S,W_0)) \frac{4c_k\log{\frac{n}{\delta}}}{n}} + \frac{8c_k\log{\frac{n}{\delta}}}{n}$$
\end{theorem}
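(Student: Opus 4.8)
The plan is to obtain the bound as a direct instance of the generic sample-compression generalization bound for the $0$--$1$ loss, applied first for a fixed initialization $W_0$ and then lifted to a random one, exactly as anticipated in the discussion preceding the statement. The key structural fact, which follows from Theorem~\ref{thm:convergence}, is that for any fixed $W_0$ satisfying \eqref{eq:initialization} SGD performs at most $M_k$ non-zero updates before reaching a global minimum, and the non-zero update at iteration $t$ alters $W_{t-1}$ by a quantity that is a deterministic function of $W_{t-1}$, the selected labeled example $(\xx_t,y_t)$, and the fixed hyperparameters $\eta,v,\alpha$. Since iterations that produce a zero update leave $W$ unchanged, the final predictor $SGD_k(S,W_0)$ can be reconstructed, via a fixed map $B_{W_0}$, by replaying the non-zero updates on the ordered list of (at most $M_k$) labeled examples that triggered them. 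This exhibits a sample-compression scheme of size $c_k\le M_k$ for $\mathcal{H}_k$, with $V$ the set of training examples never selected, so that $n\ge 2c_k$ whenever $n\ge 2M_k$.

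I would then invoke the compression generalization bound for the $0$--$1$ loss (Theorem~30.2 and Corollary~30.3 of \cite{shalev2014understanding}) with this scheme. For a fixed $W_0$ and $n\ge 2c_k$ it yields, with probability at least $1-\delta$ over the i.i.d.\ draw of $S$, precisely the inequality claimed in the theorem; the denominators become $n$ rather than $n-c_k$ after using $n-c_k\ge n/2$. One bookkeeping point is that $c_k$ is itself data-dependent; this is handled in the standard way, either by padding every compression set to the fixed size $\lceil M_k\rceil$ (appending arbitrary indices does not change the decoded hypothesis) or by a union bound over the at most $M_k+1$ possible sizes, after which the stated inequality holds with $c_k$ denoting the realized size.

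Finally I would remove the dependence on the particular $W_0$. For each $w_0$ in the support of the (data-independent) initialization distribution — every such $w_0$ satisfying \eqref{eq:initialization} — let $E_{w_0}$ be the event, over the draw of $S$, that the claimed inequality holds when the initialization is $w_0$; the previous step gives $\probargsub{E_{w_0}}{S}\ge 1-\delta$ for every $w_0$. Since $W_0$ is chosen independently of $S$, Fubini's theorem yields
\[
\prob_{S,W_0}\!\left[E_{W_0}\right] \;=\; \expect{W_0}{\probargsub{E_{W_0}}{S}} \;\ge\; 1-\delta ,
\]
which is exactly the statement of the theorem.

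There is no real obstacle here: the result is essentially a corollary of Theorem~\ref{thm:convergence} together with an off-the-shelf compression bound. The only points that require genuine care are (i) checking that ``one labeled example per non-zero update'' defines a legitimate sample-compression scheme — which relies on the decoder $B_{W_0}$ receiving the selected examples \emph{with their labels} and in the \emph{order} in which they were used, both of which are permitted for such schemes and suffice to pin down the SGD trajectory — and (ii) the two bookkeeping issues above, namely the data-dependence of the compression size $c_k$ and the independence-based passage from a fixed to a random initialization.
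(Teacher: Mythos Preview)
Your proposal is correct and follows essentially the same approach as the paper: establish a sample-compression scheme of size $c_k\le M_k$ from Theorem~\ref{thm:convergence}, invoke Theorem~30.2 and Corollary~30.3 of \cite{shalev2014understanding} for a fixed $W_0$, and then pass to a random $W_0$ via the law of total expectation using the independence of $W_0$ and $S$. You are in fact slightly more careful than the paper in flagging the bookkeeping issues (the data-dependence of $c_k$ and the need for labeled, ordered examples in the decoder), but the overall argument is the same.
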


Since $L_{V}^{0-1}(SGD_k(S,W_0)) = 0$ holds at a global minimum of $L_S$, then by Combining the results of Corollary \ref{cor:convergence} and Theorem \ref{thm:leaky_generalization}, we get the following theorem. 

\begin{theorem}
\label{thm:generalization_global_min}
If $n \geq 2c_k$ and assuming the initialization defined in \eqref{eq:nn_init}, then with probability at least $1-\delta$ over the choice of $S$ and $W_0$, SGD converges to a global minimum of $L_S$ with $0$-$1$ test error at most 
\begin{equation}
\label{eq:generalization_bound}
\frac{8}{n}\Bigg(\frac{\norm{\ww^*}^2}{\alpha^2} + O\Bigg(\frac{\norm{\ww^*}^2}{\min\{\eta,\sqrt{\eta}\}}\Bigg)\Bigg)\log{\frac{n}{\delta}}
\end{equation}	
\end{theorem}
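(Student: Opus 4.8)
The plan is to combine the optimization bound of Corollary~\ref{cor:convergence} with the compression-based generalization bound of Theorem~\ref{thm:leaky_generalization}, exploiting the fact that SGD halts exactly at a global minimum so that the empirical risk on the unused examples vanishes. Concretely, run $SGD_k(S,W_0)$ with the initialization of \eqref{eq:nn_init}. By Theorem~\ref{thm:convergence} together with Corollary~\ref{cor:convergence}, SGD terminates after at most $M_k = \frac{\norm{\ww^*}^2}{\alpha^2} + O\!\left(\frac{\norm{\ww^*}^2}{\min\{\eta,\sqrt{\eta}\}}\right)$ non-zero updates, each of which touches a single training point $\xx_t$. Hence the output hypothesis is a function only of the (at most $M_k$) examples on which a non-zero update occurred, i.e. it is $B_{W_0}(\xx_{i_1},\dots,\xx_{i_{c_k}})$ for some $c_k \le M_k$; this is precisely a compression scheme of size $c_k$ for $\mathcal{H}_k$.

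Next I would observe that since SGD stops only when no training point yields a non-zero update, at termination $\frac{\partial}{\partial W}L_{\{(\xx,y)\}}(W_{\mathrm{final}}) = 0$ for every $(\xx,y)\in S$; for the hinge loss this forces $y\net(\xx) \ge 1$ on all of $S$ (a non-zero subgradient occurs whenever $y\net(\xx) < 1$, and on the boundary we may take the zero subgradient), so in particular the $0$-$1$ empirical risk on $S$, and a fortiori on the held-out set $V$, is zero: $L_V^{0\text{-}1}(SGD_k(S,W_0)) = 0$. Plugging $c_k \le M_k$ and $L_V^{0\text{-}1}=0$ into Theorem~\ref{thm:leaky_generalization} collapses the middle square-root term and leaves
\[
L_{\mathcal{D}}^{0\text{-}1}(SGD_k(S,W_0)) \le \frac{8 c_k \log\frac{n}{\delta}}{n} \le \frac{8 M_k \log\frac{n}{\delta}}{n},
\]
which, after substituting the expression for $M_k$ from Corollary~\ref{cor:convergence}, is exactly the bound \eqref{eq:generalization_bound}. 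The hypothesis $n \ge 2c_k$ is inherited directly from Theorem~\ref{thm:leaky_generalization}, and since $M_k$ is independent of $k$ the final bound is independent of the network width, as claimed.

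The only genuinely delicate point — and the one I would be most careful about — is the monotonicity of $c_k \le M_k$ inside the bound: Theorem~\ref{thm:leaky_generalization} is stated with $c_k$ in both the sample-size condition and the rate, so I should note that any compression scheme of size $c_k$ is also (trivially, by padding) one of size $M_k$, hence one may simply take $c_k = M_k$ throughout, or alternatively observe that $\frac{8c\log(n/\delta)}{n}$ is increasing in $c$ for fixed $n \ge 2c$. A secondary subtlety is the probabilistic statement: Theorem~\ref{thm:leaky_generalization} already delivers "with probability $\ge 1-\delta$ over $S$ and $W_0$," so no additional union bound over the random initialization is needed here; the randomness of SGD's example ordering is also fine because the compression argument holds for the realized run. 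Everything else is bookkeeping: substitute, simplify, and absorb constants into the $O(\cdot)$.
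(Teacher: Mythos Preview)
Your proposal is correct and follows exactly the same route as the paper: combine the update-count bound of Corollary~\ref{cor:convergence} with the compression-based generalization bound of Theorem~\ref{thm:leaky_generalization}, and use the fact that at the global minimum $L_V^{0\text{-}1}=0$ so only the $\frac{8c_k\log(n/\delta)}{n}$ term survives. Your added remarks on padding $c_k$ up to $M_k$ and on the joint randomness over $S$, $W_0$, and the SGD ordering are sound clarifications that the paper leaves implicit.
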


 Thus for fixed $\norm{\ww^*}$ and $\eta$ we obtain a sample complexity guarantee that is independent of the network size (See Remark \ref{rem:eta_dependence} for a discussion on the dependence of the bound on $\eta$). This is despite the fact that for sufficiently large $k$, the network has \textit{global} minima that have arbitrarily high test errors, as we show in the next section. 
Thus, SGD and the linearly separable data introduce an inductive bias which directs SGD to the global minimum with low test error while avoiding global minima with high test error. In Figure \ref{fig:overparam} we demonstrate this empirically for a linearly separable data set (from a subset of MNIST) learned using over-parameterized networks. The figure indeed shows that SGD converges to a global minimum which generalizes well.

\begin{figure}	
	\begin{subfigure}{.5\textwidth}
		\centering
		\includegraphics[width=1.0\linewidth]{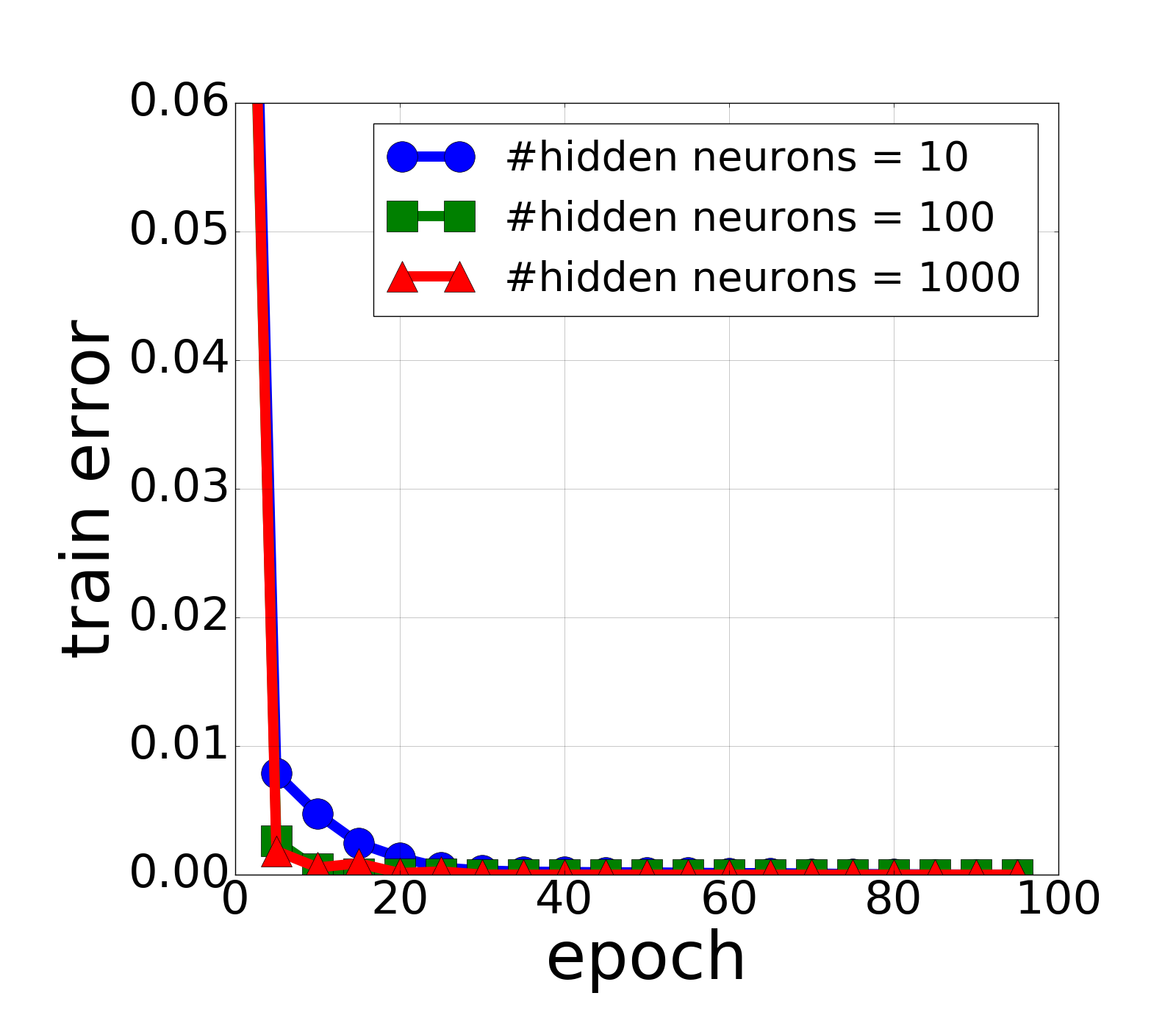}
		\caption{}
		\label{fig:sfig1}
	\end{subfigure}%
	\begin{subfigure}{.5\textwidth}
		\centering
		\includegraphics[width=1.0\linewidth]{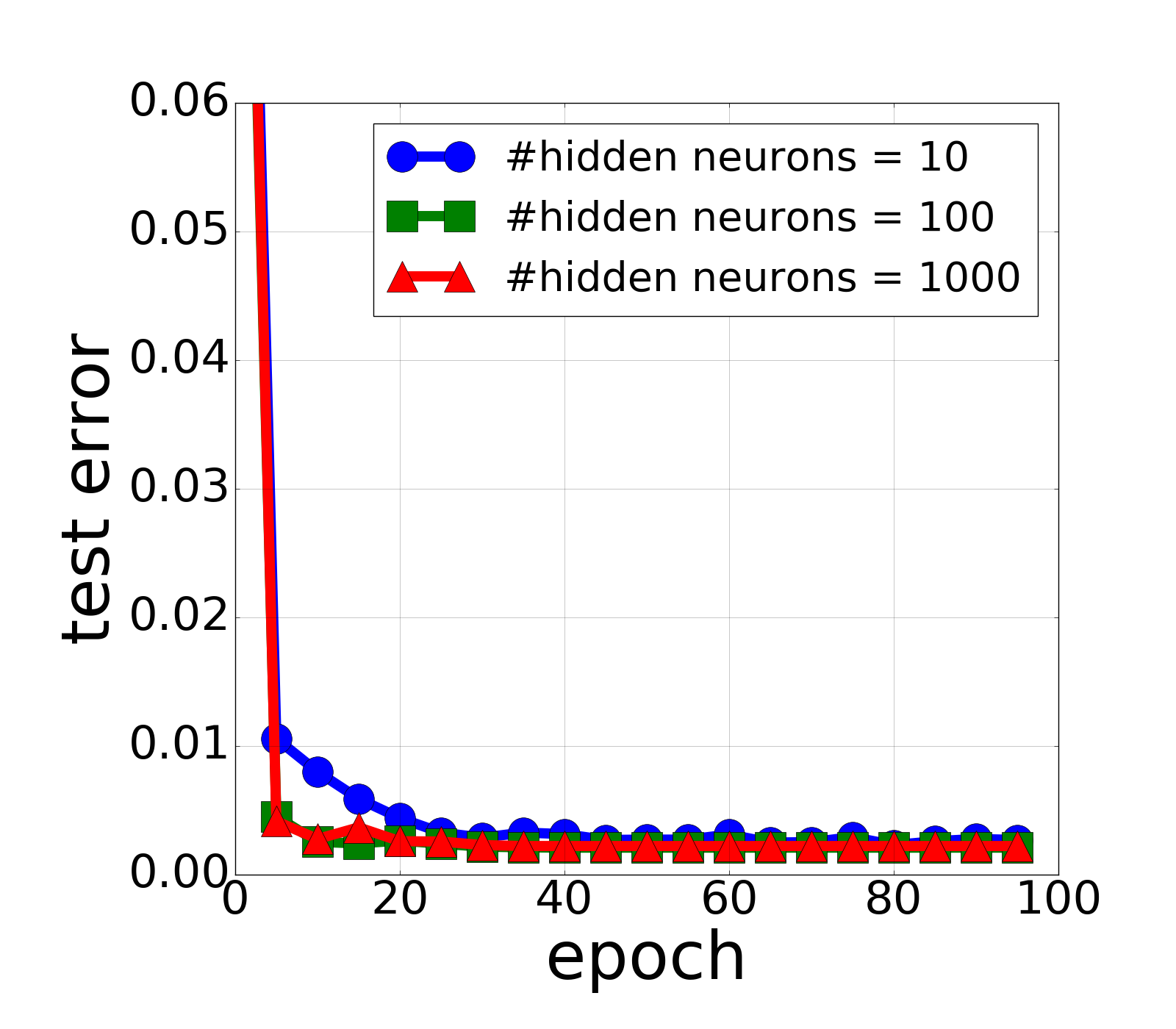}
		\caption{}
		\label{fig:sfig3}
	\end{subfigure}
	\caption{Classifying MNIST images with over-parameterized networks. The setting of Section $\ref{sec:convergence}$ is implemented (e.g., SGD with batch of size 1, only first layer is trained, Leaky ReLU activations) and SGD is initialized according to the initialization defined in \eqref{eq:nn_init}. The linearly separable data set consists of 4000 MNIST images with digits 3 and 5, each of dimension 784. The size of the training set is 3000 and the remaining 1000 points form the test set. Three experiments are performed which differ only in the number of hidden neurons, 10, 100 and 1000. In the latter two, the networks are over-parameterized. For each number of hidden neurons, 40 different runs of SGD are performed and their results are averaged. (a) shows that in all experiments SGD converges to a global minimum. (b) shows that the global minimum obtained by SGD generalizes well in all settings (including the over-parameterized).}
	\label{fig:overparam}
\end{figure}

\ignore{
In Figure \ref{fig:large_net} we show empricial results for the MNIST data set, where the goal is to classify the images with digits $3$ and $5$, which are linearly separable. The setting is the same as the network studied in previous sections - two-layer fully-connected network with Leaky ReLU activations, where here we learn a network with 100,000 hidden neurons with SGD. The experimental details are given in the Appendix. The results show that even a neural network with 100,000 hidden neurons can generalize well, as predicted by Theorem \ref{thm:generalization}. 

}

\begin{remark}
\label{rem:eta_dependence}
The generelization bound in \eqref{eq:generalization_bound} holds for $\eta \rightarrow \infty$, which is unique for the setting that we consider,
and may seem surprising, given that a choice of large $\eta$ often fails in practice. Furthermore, the bound is optimal for $\eta \rightarrow \infty$. To support this theoretical result, we show in Theorem \ref{thm:lower_bound} an example where indeed $\eta \rightarrow \infty$ is optimal in terms of the number of updates and generalization. On the other hand, we note that in practice, it may not be optimal to use large $\eta$ in our setting, since this bound results from a worst-case analysis of a sequence of examples encountered by SGD. Finally, the important thing to note is that the bound holds for any $\eta$, and is thus applicable to realistic applications of SGD.  

\ignore{
The generelization bound in \eqref{eq:generalization_bound} holds for $\eta \rightarrow \infty$ which is unique for the setting that we consider. Presumably, similar results should not hold in a more general setting. Furthermore, even though $\eta \rightarrow \infty$ optimizes the bounds in \eqref{eq:generalization_bound}, in practice, it need not be optimal since the bound results from a worst-case analysis of sequence of examples encountered by SGD. Nonetheless, in Theorem \ref{thm:lower_bound} we show an example where $\eta \rightarrow \infty$ is optimal in terms of the number of updates and generalization.
}
\ignore{
It may appear that from the bound in \eqref{eq:generalization_bound} we can conclude that the best test error is achieved when $\eta \rightarrow \infty$. However, we argue that deducing that $\eta \rightarrow \infty$ is optimal in terms of generalization performance is incorrect. Recall that the bound of Theorem \ref{thm:convergence} is on the worst-case sequence of examples. Thus, the worst-case bound does not imply that SGD with a random order of examples will obtain the upper bound (say, in expectation) and will have the same dependence on $\eta$ for the random sequences of examples it encounters. Therefore, it might be the case that increasing the learning rate, increases the number of updates and by the compression bound the test error can potentially increase. Indeed, in Figure \ref{fig:large_learning_rate} we show that for a linearly separable data set which consists of MNIST images of digits 3 and 5, increasing the learning rate to a  high value increases the number of updates and degrades the generalization performance. On the other hand, in Theorem \ref{thm:lower_bound} we show an example where $\eta \rightarrow \infty$ is optimal in terms of the number of updates and generalization. Hence, the dependence of the number of updates and the compression bound  on $\eta$ relies on the sequence of examples encountered by SGD and may vary in different settings.
}
\end{remark}

\ignore{

\begin{figure}
	\begin{subfigure}{.33\textwidth}
		\centering
		\includegraphics[width=1.0\linewidth]{lr_train.png}
		\caption{}
		\label{fig:sfig1}
	\end{subfigure}%
	\begin{subfigure}{.33\textwidth}
		\centering
		\includegraphics[width=1.0\linewidth]{lr_updates.png}
		\caption{}
		\label{fig:sfig2}
	\end{subfigure}%
	\begin{subfigure}{.33\textwidth}
		\centering
		\includegraphics[width=1.0\linewidth]{lr_test.png}
		\caption{}
		\label{fig:sfig3}
	\end{subfigure}

	\caption{Effect of the learning rate on generalization. The setting of Section $\ref{sec:convergence}$ is implemented (e.g., SGD with batch of size 1, only first layer is trained, Leaky ReLU activations) with the linear separable data set consisting of 4000 MNIST images with digits 3 and 5. The network has 50 hidden neurons, the size of the training set is 3000 and the remaining 1000 points form the test set. Two experiments are performed which differ only in the constant learning rate, one with value 0.01 and the other with value 100. For each learning rate, 20 different executions of SGD are performed and their results are averaged. (a) shows that the training error obtained in both experiments is 0. (b) shows that for a higher learning rate there are substantially more updates. (c) shows that for the lower learning rate the generalization performance is better, which coincides with the lower number of updates and the compression bound in Theorem \ref{thm:generalization}.}
	\label{fig:large_learning_rate}
\end{figure}

}

\subsection{Expressiveness}
\label{sec:expressiveness}

Let $X \in \reals^{d \times n}$ be the matrix with the points $\xx_i$ in its columns, $\yy \in \{-1,1\}^n$ the corresponding vector of labels and let $\net(X) =  \vv^\top\sigma(WX)$ be the network defined in \eqref{eq:network} applied on the matrix $X$. By Theorem 8 in \citep{soudry2017exponentially} we immediately get the following. For completeness, the proof is given in the Appendix (Section \ref{sec:proof_thm_expressiveness}). 

\begin{theorem}\textsl{}
	\label{thm:expressiveness}
	Assume that $k \geq 2\left \lceil \frac{n}{2d-2} \right \rceil$. Then for any $\yy \in \{-1,1\}^n$ and for almost any $X$,\footnote{That is, the set of entries of $X$ which do not satisfy the statement is of Lebesgue measure 0.} there exist $\tilde{\mw} = (\tilde{W},\tilde{\vv})$ where $\tilde{W} \in \reals^{2k \times d}$ and $\tilde{\vv} = (\overbrace{\tilde{v} \dots \tilde{v}}^k, \overbrace{-\tilde{v} \dots -\tilde{v}}^k) \in \reals^{2k}$, $\tilde{v}>0$ such that $\yy = N_{\tilde{\mw}}(X)$.
\end{theorem}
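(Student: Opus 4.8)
The plan is to reduce the statement to Theorem~8 of \citet{soudry2017exponentially}, whose only discrepancy with our setting is the parametrization of the output layer. That theorem shows that a two-layer network with leaky ReLU activations (the $\sigma$ of this section) and at least $2\lceil n/(2d-2)\rceil$ hidden units is expressive enough to interpolate any labeling of $n$ points in general position. So the first step is to instantiate it: for any $\yy\in\{-1,1\}^n$ and for almost every $X\in\reals^{d\times n}$, it yields hidden weights $\ww_1,\dots,\ww_r$ and output weights $u_1,\dots,u_r$ with $r\le 2\lceil n/(2d-2)\rceil\le 2k$ such that the network $\xx\mapsto\sum_{j=1}^r u_j\,\sigma(\inner{\ww_j,\xx})$ outputs exactly $\yy$ on the columns of $X$.

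The second step is to put this network into the normalized form required here, namely $N_{\tilde\mw}$ with $\tilde\vv=(\overbrace{\tilde v\dots\tilde v}^k,\overbrace{-\tilde v\dots -\tilde v}^k)$. This is precisely the positive-homogeneity argument already given in \secref{sec:formulation} (the footnote after \eqref{eq:network}): using $\sigma(cz)=c\,\sigma(z)$ for $c>0$, a hidden unit with incoming vector $\ww_j$ and outgoing weight $u_j>0$ is replaced by a unit with incoming vector $(u_j/\tilde v)\ww_j$ and outgoing weight $\tilde v$; a unit with $u_j<0$ is replaced by one with incoming vector $(|u_j|/\tilde v)\ww_j$ and outgoing weight $-\tilde v$; and every unit with $u_j=0$, together with all of the remaining $2k-r$ slots, receives the zero incoming vector, which contributes nothing since $\sigma(0)=0$. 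Fixing any $\tilde v>0$ (e.g.\ $\tilde v=1$), placing the units with positive outgoing weight in the first $k$ coordinates and those with negative outgoing weight in the last $k$ coordinates, and padding each of the two halves with zero rows up to size $k$, produces $\tilde\mw=(\tilde W,\tilde\vv)$ of the stated form with $N_{\tilde\mw}(X)=\yy$.

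The only thing that needs checking, and the main (minor) obstacle, is that the network model used in \citet{soudry2017exponentially} genuinely coincides with \eqref{eq:network}: in particular that their construction carries no bias term (or that any bias can be absorbed), and that an imbalance between the number of positive-outgoing and negative-outgoing units in their construction does not cause a problem --- it does not, since we are free to add zero units to either half independently as long as each half stays within $k$, which is guaranteed by $r\le 2k$. One should also record the implicit hypothesis $d\ge 2$, so that $2d-2>0$ and $\lceil n/(2d-2)\rceil$ is well defined. Modulo this bookkeeping the theorem follows immediately from the cited result and the homogeneity rescaling, with no new argument required.
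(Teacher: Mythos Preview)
Your approach is correct and is close in spirit to the paper's, but the route is not quite the same. The paper also reduces to Theorem~8 of \citet{soudry2017exponentially}, but rather than invoking it once for the target $\yy\in\{-1,1\}^n$ and then normalizing the output weights via positive homogeneity, it invokes it twice to build two ``indicator'' networks $N_{\mw_1}$ and $N_{\mw_2}$ with $N_{\mw_1}(\xx_i)=1$ iff $y_i=1$ (else $0$) and $N_{\mw_2}(\xx_i)=1$ iff $y_i=-1$ (else $0$). Their difference $N_{\mw_1}-N_{\mw_2}$ outputs exactly $\yy$ and already has the required $(\tilde v,\dots,\tilde v,-\tilde v,\dots,-\tilde v)$ output layer, with the positive block coming from $N_{\mw_1}$ and the negative block from $N_{\mw_2}$. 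Your homogeneity-rescaling route is equally valid and arguably more direct; the paper's two-network route has the advantage that it sidesteps the imbalance issue you flag, since the positive and negative halves are produced separately by construction.

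One small correction to your bookkeeping: the claim that ``each half stays within $k$, which is guaranteed by $r\le 2k$'' does not follow from $r\le 2k$ alone --- nothing prevents all $r$ units from having outgoing weights of the same sign. What actually rescues the argument is that the hypothesis is $k\ge 2\lceil n/(2d-2)\rceil$, so in fact $r\le 2\lceil n/(2d-2)\rceil\le k$ (not merely $\le 2k$), and then even a fully one-sided split fits in a single half.
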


Theorem \ref{thm:expressiveness} implies that for sufficiently large networks, the optimization problem (\ref{eq:optimization}) can have arbitrarely \textit{bad} global minima with respect to a given test set, i.e., ones which do not generalize well on a given test set.

\ignore{
\subsection{Inductive Bias}
\label{sec:inductive_bias}

Recall that in the proof of Theorem \ref{thm:convergence} we got 
\begin{equation}
\label{eq:inductive_bias}
\frac{- G(W_0)\norm{\vec{W}^*} + 2k \eta v \alpha t}{\norm{\vec{W}^*} \sqrt{G(W_0)^2 + t(2k \eta^2 v^2+2 \eta)}} \leq \frac{F(W_t)}{G(W_t)\norm{\vec{W}^*}} \leq 1
\end{equation}

where $\vec{W}_t=(\wvec{1}_t \dots \wvec{k}_t, \uvec{1}_t \dots \uvec{k}_t)$ and $\vec{W}^*=(\overbrace{\ww^* \dots \ww^*}^k, \overbrace{-\ww^* \dots -\ww^*}^k)$. Here we use this inequality to characterize the inductive bias introduced in the optimization process. The function on the left side of \eqref{eq:inductive_bias} satisfies the following properties. The proof is given in the Appendix (Section \ref{sec:proof_lemma_inductive_bias}).

\begin{lem}
	\label{lem:inductive_bias}
	The function $f(t)=\frac{- G(W_0)\norm{\vec{W}^*} + 2k \eta v \alpha t}{\norm{\vec{W}^*} \sqrt{G(W_0)^2 + t(2k \eta^2 v^2+2 \eta)}}$ is monotone increasing for $t>0$ and converges to $\infty$ as $t \rightarrow \infty$.
\end{lem}

{\bf \color{red} SHAI: This lemma is strange to read. Furst, $\infty$ should be $1$. Second, the proof of convergence shows that $t$ will never go to $\infty$, so while the lower bound may go to $1$, the cosine is not. So, maybe find a better phrasing for the lemma and its meaning. }

Lemma \ref{lem:inductive_bias} and \eqref{eq:inductive_bias} imply that the cosine similarty measure between the vectors $\vec{W}_t$ and $\vec{W}^*$ is lower bounded by a function which increases in each iteration and approaches 1 until convergence. Equivalently, the angle between the vectors $\vec{W}_t$ and $\vec{W}^*$, denoted by $\theta_t$, is upper bounded by a function which decreases in each iteration and approaches $0$ until convergence. Therefore, for any network size, SGD is biased towards the vector $\vec{W}^*$ which corresponds to a network which implements the true linear classifier. This captures nicely the dynamics of SGD in the case where the data is linear separable. Note that SGD is \textit{not} guaranteed to converge in angle to $\vec{W}^*$ since it can get to the global minimum before the angle between $\vec{W}_t$ and $\vec{W}^*$ gets close to $0$. Nonetheless, the inductive bias shows where SGD is trying to converge to and it will get closer in angle to $\vec{W}^*$ as it encounters more points with non-zero gradient. Figure \ref{fig:inductive_bias} illustrates the inductive bias in a two dimensional setting. 

\begin{figure}
	\begin{subfigure}{.5\textwidth}
		\centering
		\includegraphics[width=1.0\linewidth]{inductive_bias_plot.png}
		\caption{}
		\label{fig:inductive_bias_a}
	\end{subfigure}%
	\begin{subfigure}{.5\textwidth}
		\centering
		\includegraphics[width=1.0\linewidth]{inductive_bias_scatter.png}
		\caption{}
		\label{fig:inductive_bias_b}
	\end{subfigure}%

	\caption[Caption]{Inductive bias effect in two dimensions. The setting of Section \ref{sec:convergence} is implemented for $d=2$ (e.g., only first layer is trained, SGD with batch of size 1, Leaky ReLU activations) where the neural network has 1000 hidden neurons and the linearly separable data is sampled from a Gaussian distribution and labeled by the vector $\ww^*=(1,0)$. The size of the training set is 500. (a) shows that the angle between $\vec{W}_t$ and $\vec{W}^*$ decreases in almost every epoch and gets close to 0. (b) shows the classification of the trained network on a grid of points. It can be seen that the obtained solution is approximately a linear classifier and thus a low complexity solution.\footnotemark}
	\label{fig:inductive_bias}
\end{figure}
}

\section{ReLU- Success and Failure Cases}
\label{sec:relu}
In this section we consider the same setting as in section \ref{sec:convergence}, but with the ReLU activation function $\sigma(x)=\max\{0,x\}$. In Section \ref{sec:bad_minima} we show that the loss function contains arbitrarely bad local minima. In Section \ref{sec:relu_orthogonal} we give an example where for a sufficiently small network, with high probability SGD will converge to a local minimum. On the other hand, for a sufficiently large network, with high probability SGD will converge to a global minimum.

\subsection{Existence of bad local minima}
\label{sec:bad_minima}

\footnotetext{We can only conclude that the trained network is \textit{approximately} a linear classifier because of the limited resolution of the grid.}

The result is summarized in the following theorem and the proof is deferred to the Appendix (Section \ref{sec:proof_thm_bad_minima}). The main idea is to construct a network with weight paramater $W$ such that for at least $\frac{|S|}{2}$ points $(\xx,y) \in S$ it holds that $\inner{\ww,\xx} < 0$ for each neuron with weight vector $\ww$.  Furthermore, the remaining points satisfy $y \net(\xx) > 1$ and thus the gradient is zero and $L_S(W) > \frac{1}{2}$. 

\begin{theorem}
	\label{thm:bad minima}
	Fix $\vv = (\overbrace{1 \dots 1}^k, \overbrace{-1 \dots -1}^k) \in \reals^{2k}$.	Then, for every finite set of examples $S \subseteq \mathcal{X} \times \mathcal{Y}$
	that is linearly separable, i.e., for which there exists $\ww^* \in \reals^d$
	such that for each $(\xx,y) \in S$ we have $y \inner{\ww^*, \xx} \geq 1$,
	there exists $W \in \reals^{2k \times d}$ such that $W$ is a local minimum point
	with $\loss{S} > \frac{1}{2}$.
\end{theorem}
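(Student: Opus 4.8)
The plan is to exhibit one explicit weight matrix $W$ at which $L_S$ is \emph{locally constant} with value $>\tfrac12$; since a locally constant point is in particular a local minimum, this settles the theorem. The picture is: use $\ww^*$ to force the network to output exactly $0$ on a majority of the training points — each such point then has hinge loss exactly $1$ — while the remaining points are classified with margin strictly larger than $1$.

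Concretely, I would first fix $\ww^*$ with $y\inner{\ww^*,\xx}\ge 1$ for all $(\xx,y)\in S$, so that $\inner{\ww^*,\xx}\le-1$ on the $-1$-labeled points and $\inner{\ww^*,\xx}\ge 1$ on the $+1$-labeled points. Let $T\subseteq S$ be the larger of the two label classes; after possibly replacing $\ww^*$ by $-\ww^*$ and interchanging the two blocks of $k$ neurons, assume $T$ is the set of $-1$-labeled points, so $|T|\ge\lceil n/2\rceil$. Now define $W$ by making every row a positive multiple of $\ww^*$: the $k$ neurons with outgoing weight $+1$ get $\wvec{i}=c\,\ww^*$, the $k$ with outgoing weight $-1$ get $\uvec{i}=\epsilon\,\ww^*$, where $c>0$ is large and $0<\epsilon<c$ is small (fixed below). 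Write $N(\xx)=\vv^\top\sigma(W\xx)$.

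Next I would read off the behavior of $W$. For $(\xx,-1)\in T$ every pre-activation equals $c\inner{\ww^*,\xx}$ or $\epsilon\inner{\ww^*,\xx}$, hence is $\le -\min\{c,\epsilon\}<0$, so $\sigma$ kills all of them, $N(\xx)=0$, and the hinge loss is $\max\{1,0\}=1$. For $(\xx,+1)\in S\setminus T$ every pre-activation is $\ge \epsilon>0$, so $\sigma$ acts as the identity and $N(\xx)=k\,c\inner{\ww^*,\xx}-k\,\epsilon\inner{\ww^*,\xx}=k(c-\epsilon)\inner{\ww^*,\xx}\ge k(c-\epsilon)$; choosing $c,\epsilon$ with $k(c-\epsilon)>1$ (e.g.\ $\epsilon=c/2$, $c>4/k$) yields $yN(\xx)=N(\xx)>1$ and hinge loss $0$. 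Hence $L_S(W)=|T|/n\ge\tfrac12$, with strict inequality once the two classes have different sizes (in particular whenever $n$ is odd); in the degenerate balanced even-$n$ case one instead runs the same construction starting from the normal of a homogeneous halfspace containing strictly more than half of the $\xx_i$, in place of $\ww^*$.

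The hard part will be arguing that $W$ is a local minimum — here a vanishing (sub)gradient is not sufficient, since $L_S$ is both non-convex and non-smooth. The key point I would emphasize is that the two \emph{strict} inequalities above make $L_S$ exactly constant, not merely stationary, in a neighborhood of $W$. Writing $N'(\xx)=\vv^\top\sigma(W'\xx)$, the maps $W'\mapsto N'(\xx)$ and $W'\mapsto W'\xx$ are continuous (linear maps composed with the continuous $\sigma$), and $S$ is finite, so there is an $r>0$ with the following property for all $W'$ satisfying $\|W'-W\|<r$: every $(\xx,+1)\in S\setminus T$ still has $yN'(\xx)>1$ and hence hinge loss $0$, and every $(\xx,-1)\in T$ still has all $2k$ of its pre-activations strictly negative and hence $N'(\xx)=0$ and hinge loss exactly $1$. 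Therefore $L_S(W')=|T|/n=L_S(W)$ for all such $W'$, so $L_S$ is locally constant at $W$ and in particular $W$ is a local minimum of value $>\tfrac12$. The only remaining thing to watch is the elementary counting that puts $|T|/n$ strictly above $\tfrac12$; everything else follows from the positive homogeneity of $\sigma$ and continuity.
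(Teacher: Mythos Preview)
Your main construction --- every row a positive multiple of $\ww^*$ --- is correct and cleaner than the paper's: it yields a point at which $L_S$ is locally constant with value $|T|/n\ge\tfrac12$, where $T$ is the majority label class. The gap is the \emph{strict} inequality the theorem demands. When the two label classes are exactly balanced you only get $|T|/n=\tfrac12$, and your one-line fix (replace $\ww^*$ by the normal $\hat{\ww}$ of a homogeneous halfspace containing strictly more than half of the $\xx_i$) does not work. With rows $c\hat{\ww}$ and $\epsilon\hat{\ww}$, every point $\xx$ on the positive side of $\hat{\ww}$ has all ReLUs on and $N(\xx)=k(c-\epsilon)\inner{\hat{\ww},\xx}>0$ \emph{regardless of its label}; a $-1$-labeled point there then has $yN(\xx)<0$, its hinge loss is active, and the gradient with respect to each $\wvec{i}$ is a nonzero multiple of $\xx$. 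So this $W$ is not even a critical point. The property you relied on --- that the chosen direction correctly classifies every live-side point with margin $>1$ --- is exactly what an arbitrary $\hat{\ww}$ lacks.

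The paper's construction decouples the two roles. It first picks $\hat{\ww}$ with strictly more than half of $S$ on its negative side (irrespective of labels), and then sets $\wvec{i}=c\hat{\ww}+\ww^*$ and $\uvec{i}=c\hat{\ww}-\ww^*$ for $c$ large. The $c\hat{\ww}$ term forces the sign of every pre-activation (so strictly more than half the points are dead, giving $L_S>\tfrac12$), while on the live side the two $c\hat{\ww}$ contributions cancel in $N(\xx)$, leaving $N(\xx)=2k\inner{\ww^*,\xx}$ and hence $yN(\xx)\ge 2k>1$ for \emph{either} label. Your local-constancy argument then applies verbatim; what your proof is missing is precisely this combination of a geometric majority direction $\hat{\ww}$ with the separator $\ww^*$.
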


\subsection{Orthogonal vectors - simple case analysis}
\label{sec:relu_orthogonal}

In this section we assume that $S=\{\elm _1 \dots \elm_d\}\times\{1\} \subseteq \mathcal{X} \times \mathcal{Y}$ where $\{\elm_1, \dots, \elm_d\}$ is the standard basis of $\reals^d$.
We assume all examples are labeled with the same label for simplicity, as the same result holds for the general case.
\\
Let $N_{\mw_t}$ be the network obtained at iteration $t$, where $\mw_t = (W_t,\vv)$. Assume we initialize  
with fixed $\vv = (\overbrace{1 \dots 1}^k, \overbrace{-1 \dots -1}^k)$, and
$W_0 \in \reals^{2k \times d}$ is randomly initialized from a continuous symmetric distribution
with bounded norm, i.e $|[W_0]_{i,j}| \le C$ for some $C > 0$.

The main result of this section is given in the following theorem. The proof is given in the Appendix (Section \ref{sec:proof_relu_orthogonal}). The main observation is that  the convergence to non-global minimum depends solely on the initialization and occurs if and only if there exists a point $\xx$ such that for all neurons, the corresponding initialized weight vector $\ww$ satisfies $\inner{\ww,\xx} \leq 0$.

\begin{theorem}
	\label{thm:orth_convergence}
	Fix $\delta > 0$ and assume we run SGD with examples from $S=\{\elm _1 \dots \elm_d\}\times\{1\}$.
	If $k \le \log_2 (\frac{d}{-\ln(\delta)})$, then with probability of at least $1-\delta$, SGD will converge to a non global minimum point. \\
	On the other hand, if
	$k \ge \log_2 (\frac{2d}{\delta})$,
	then with probability
	of at least $1 - \delta$, SGD will converge to a global minimum point after
	$\lceil \max\{\frac{dC}{\eta},\frac{d}{\eta}\}\rceil$ iterations.
\end{theorem}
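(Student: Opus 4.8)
The plan is to exploit the orthogonality of the data to decouple the problem column-by-column. Since $\inner{\ww,\elm_j}$ equals the $j$-th coordinate of $\ww$, the gradient of the hinge loss on the example $(\elm_j,1)$ is supported on the $j$-th column of $W$; hence the columns of $W_t$ evolve independently, and the $j$-th column changes only at iterations where $\elm_j$ is sampled. Writing the first $k$ rows of $W$ (the ones with outgoing weight $+1$) as $\ww_1,\dots,\ww_k$ and the last $k$ (outgoing weight $-1$) as $\uu_1,\dots,\uu_k$, we have $N(\elm_j)=\sum_{i=1}^k\sigma((\ww_i)_j)-\sum_{i=1}^k\sigma((\uu_i)_j)$. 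First I would record the per-column dynamics of a non-zero update on $\elm_j$ (which happens exactly when $N(\elm_j)<1$): every coordinate $(\ww_i)_j$ that is currently $>0$ increases by $\eta$, every $(\uu_i)_j$ that is currently $>0$ decreases by $\eta$, and every coordinate that is $\le 0$ is left unchanged, since the ReLU derivative vanishes there; in particular no coordinate ever crosses from $\le 0$ to $>0$, and $N(\elm_j)$ is non-decreasing along the updates of column $j$, increasing by at least $\eta$ whenever column $j$ still has a positive ``$\ww$''-coordinate. Once $N(\elm_j)\ge 1$, column $j$ is frozen forever.

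Call column $j$ \emph{dead} if $(\ww_i)_j\le 0$ for all $i\in\{1,\dots,k\}$ at initialization. If some column $j$ is dead, its ``$\ww$''-coordinates stay frozen at non-positive values, so $N(\elm_j)=-\sum_i\sigma((\uu_i)_j)\le 0<1$ for all $t$; thus the loss on $\elm_j$ stays $\ge 1$, while each column (dead or not) admits only finitely many non-zero updates (the positive ``$\uu$''-coordinates, bounded by $C$, are exhausted after at most $\lceil C/\eta\rceil$ updates), so SGD reaches a fixed point at which $L_S\ge 1/d>0$ --- a non-global minimum. Conversely, suppose no column is dead. Fix $j$ and pick $i_0\le k$ with $(\ww_{i_0})_j>0$. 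Since $|[W_0]_{a,b}|\le C$, after at most $\lceil C/\eta\rceil$ non-zero updates of $\elm_j$ all ``$\uu$''-coordinates of column $j$ are $\le 0$, and after at most $\lceil 1/\eta\rceil$ further such updates $(\ww_{i_0})_j\ge 1$, whence $N(\elm_j)\ge\sigma((\ww_{i_0})_j)\ge 1$ and column $j$ is frozen at zero loss. Hence at most $\lceil\max\{C,1\}/\eta\rceil$ non-zero updates suffice per column; summing over the $d$ columns bounds the number of iterations until $L_S=0$ by $\lceil\max\{dC/\eta,\,d/\eta\}\rceil$, so SGD converges to a global minimum.

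It remains to bound the probability of the event $\{\exists j:\ \text{column }j\text{ is dead}\}$. Because the entries of $W_0$ are i.i.d.\ from a continuous distribution symmetric about $0$, each coordinate is $\le 0$ with probability $\frac12$ (a zero entry has probability $0$), so the event $A_j$ that column $j$ is dead has $\prob[A_j]=2^{-k}$, and $A_1,\dots,A_d$ are independent; therefore $\prob[\exists j:\ A_j]=1-(1-2^{-k})^d$. For the first claim, $1-(1-2^{-k})^d\ge 1-e^{-d\,2^{-k}}$, and the hypothesis $k\le\log_2(d/(-\ln\delta))$ is precisely $d\,2^{-k}\ge-\ln\delta$, i.e.\ $e^{-d\,2^{-k}}\le\delta$; combined with ``dead column $\Rightarrow$ non-global minimum'' this gives convergence to a non-global minimum with probability at least $1-\delta$. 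For the second claim, a union bound gives $\prob[\exists j:\ A_j]\le d\,2^{-k}$, and $k\ge\log_2(2d/\delta)$ gives $d\,2^{-k}\le\delta/2<\delta$; so with probability at least $1-\delta$ no column is dead, and by the previous paragraph SGD converges to a global minimum within $\lceil\max\{dC/\eta,\,d/\eta\}\rceil$ iterations.

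The main obstacle is making the per-column dynamics airtight: establishing the monotonicity of $N(\elm_j)$ (so that columns genuinely ``freeze'' and there is no oscillation), extracting the exact update counts $\lceil C/\eta\rceil$ and $\lceil 1/\eta\rceil$ from the norm bound on $W_0$, and turning the per-column non-zero-update counts into the clean iteration bound (which implicitly assumes a fair ordering of the examples, e.g.\ cyclic). One also has to argue, in both cases, that SGD actually \emph{converges} --- i.e.\ that after finitely many non-zero updates it sits at a fixed point of the iteration --- and to handle the measure-zero event that some entry of $W_0$ is exactly $0$. Once these are in place, the probabilistic estimates are routine.
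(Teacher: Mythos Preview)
Your proposal is correct and follows essentially the same route as the paper: the paper packages the invariance ``dead columns stay dead, live columns stay live'' as a lemma showing that the set $K_t=\{\elm_j:\forall i\le k,\ \inner{\wvec{i}_t,\elm_j}\le 0\}$ is constant in $t$, then runs the identical probabilistic computations (independence for the failure case, a union bound for the success case) and the same per-column update count. Your write-up is, if anything, slightly more explicit about the monotonicity of $N(\elm_j)$ and about the ambiguity between ``non-zero updates'' and ``iterations'' in the stated bound; the only cosmetic slip is that your sequential phrasing (``$\lceil C/\eta\rceil$ updates, then $\lceil 1/\eta\rceil$ further updates'') should be simultaneous, which is exactly what your final per-column bound $\lceil\max\{C,1\}/\eta\rceil$ reflects.
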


Note that in the first part of the theorem, we can make the basin of attraction of the non-global minimum exponentially large by setting $\delta = e^{-\alpha d}$ for $\alpha \leq \frac{1}{2}$. 

\section{Conclusion}
\label{sec:discussion}

Understanding the performance of over-parameterized neural networks is essential for explaining the success of deep learning models in practice. Despite a plethora of theoretical results for generalization of neural networks, none of them give guarantees for over-parameterized networks. In this work, we give the first provable guarantees for the generalization performance of over-parameterized networks, in a setting where the data is linearly separable and the network has Leaky ReLU activations. We show that SGD compresses its output when learning over-parameterized networks, and thus exhibits good generalization performance.

The analysis for networks with Leaky ReLU activations does not hold for networks with ReLU activations, since in this case the loss contains spurious local minima. However, due to the success of over-parameterized networks with ReLU activations in practice, it is likely that similar results hold here as well. It would be very interesting to provide convergence guarantees and generalization bounds for this case. Another direction for future work is to show that similar results hold under different assumptions on the data.

\subsubsection*{Acknowledgments}
This research is supported by the Blavatnik Computer Science Research Fund and the European Research Council (TheoryDL project).

\bibliography{linsep}
\bibliographystyle{icml2017}

\appendix

\section{Appendix}
\label{sec:appendix}

\subsection{Missing Proofs for Section \ref{sec:convergence}}

\subsubsection{Proof of Proposition \ref{prop:loss_properties}}

	\begin{enumerate}
		\item Denote by $\vec{W} = \left(\wvec{1} \dots \wvec{k} \uvec{1} \dots \uvec{k} \right) \in \reals^{2kd}$ the vector of all parameters where each $\wvec{i},\uvec{i} \in \mathbb{R}^d$. Let $(\xx,y) \in S$, then if $y\net{}(\xx) < 1$, it holds that $$\inner{\frac{\partial}{\partial \wvec{i}}L_{\{(\xx,y)\}}(\vec{W}),\ww^*} = \inner{y\sigma'\left(\inner{\wvec{i},\xx}\right) \xx, \ww^*} \geq \sigma'\left(\inner{\wvec{i},\xx}\right) > 0$$
		and similarly,
		$$\inner{\frac{\partial}{\partial \uvec{i}}L_{\{(\xx,y)\}}(\vec{W}),-\ww^*} = \inner{-y\sigma'\left(\inner{\uvec{i},\xx}\right) \xx, -\ww^*} \geq \sigma'\left(\inner{\wvec{i},\xx}\right) > 0 \text{.}$$
		
		Hence if we define $\vec{W}^* = (\overbrace{\ww^* \dots \ww^*}^k, \overbrace{-\ww^* \dots -\ww^*}^k) \in \reals^{2kd}$, then $$\inner{\frac{\partial}{\partial \vec{W}}L_{\{(\xx,y)\}}(\vec{W}),\vec{W}^*} > 0$$

		Otherwise, if $y\net{}(\xx) \geq 1$, then the gradient vanishes and thus $$\inner{\frac{\partial}{\partial \vec{W}}L_{\{(\xx,y)\}}(\vec{W}),\vec{W}^*} = 0$$
		
		It follows that if there exists $(\xx,y) \in S$, such that $y\net{}(\xx) < 1$, then we have $$\inner{\frac{\partial}{\partial \vec{W}}L_{S}(\vec{W}),\vec{W}^*} = \frac{1}{n}\sum_{i=1}^{n}{\inner{\frac{\partial}{\partial \vec{W}}L_{\{(\xx_i,y_i)\}}(\vec{W}),\vec{W}^*}} > 0$$ and thus $\frac{\partial}{\partial \vec{W}}L_{S}(\vec{W}) \neq 0$. Therefore, for any critical point it holds that $y\net{}(\xx) \geq 1$ for all $(\xx,y) \in S$, which implies that it is a global minimum.
		\item For simplicity consider the function $f_{\xx}(\ww,\uu) = \sigma(\inner{\ww,\xx}) - \sigma(\inner{\uu,\xx})$ for $\xx \ne 0$. Define $\ww_1=\ww_2=\uu_1=\xx$ and  $\uu_2 = -\xx$. Then $$f_{\xx}(\ww_1,\uu_1) = 0$$
		$$f_{\xx}(\ww_2,\uu_2) = (1+\alpha)\norm{\xx}^2 $$
		and
		$$f_{\xx}(\frac{\ww_1 + \ww_2}{2},\frac{\uu_1 + \uu_2}{2}) = \norm{\xx}^2 $$
		and thus $f_{\xx}(\frac{\ww_1 + \ww_2}{2},\frac{\uu_1 + \uu_2}{2}) > \frac{1}{2}f_{\xx}(\ww_1,\uu_1) + \frac{1}{2}f_{\xx}(\ww_2,\uu_2)$ which implies that the function is not convex.
	\end{enumerate}

\subsubsection{Proof of Theorem \ref{thm:convergence}}
\label{sec:thm_convergence_proof}
	Assume SGD performed $t$ non-zero updates. We will show that $t\leq M_k$. We note that if there is no $(\xx,y) \in S$ such that the corresponding update is non-zero, then SGD has reached a critical point of $L_S$ (which is a global minimum by Proposition \ref{prop:loss_properties}).
	Let $\vec{W}^* = (\overbrace{\ww^* \dots \ww^*}^k, \overbrace{-\ww^* \dots -\ww^*}^k) \in \reals^{2kd}$ and note that $L_S(\vec{W^*})=0$, i.e., $\vec{W^*}$ is a global minimum. Define the following two functions:
	\[
	F(W_t) = \inner{\vec{W}_t, \vec{W}^*} = \sum_{i=1}^k \inner{\wvec{i}_t,\ww^*} -
	\sum_{i=1}^k \inner{\uvec{i}_t,\ww^*}
	\]
	\[
	G(W_t) = \norm{\vec{W}_t} = \sqrt{\sum_{i=1}^k \norm{\wvec{i}_t}^2 +
		\sum_{i=1}^k \norm{\uvec{i}_t}^2}
	\]
	Then, from Cauchy-Schwartz inequality we have
	\begin{equation}
	\label{cs-eq}
	\frac{\left|F(W_t)\right|}{G(W_t)\norm{\vec{W}^*}}  = 
	\frac{\left|\inner{\vec{W}_t, \vec{W}^*}\right|}{\norm{\vec{W}_t}\norm{\vec{W}^*}}
	\le 1
	\end{equation}
	
	Since the update at iteration $t$ is non-zero, we have $y_tN_{t-1}(\xx_t) < 1$ and the update rule is given by 
	\begin{equation}
	\label{eq:gradient_updates}
	\wvec{i}_t = \wvec{i}_{t-1} + \eta v\pti y_t\xx_t   \ \ , \ \ 
	\uvec{i}_t = \uvec{i}_{t-1} - \eta v\qti  y_t\xx_t
	\end{equation}
	where $\pti = 1$ if $\inner{\wvec{i}_{t-1},\xx_t} \geq 0$ and $\pti = \alpha$ otherwise. Similarly $\qti = 1$ if $\inner{\uvec{i}_{t-1},\xx_t} \geq 0$ and $\qti = \alpha$ otherwise.
	It follows that:
	\begin{align*}
	G(W_t)^2 & = \sum_{i=1}^k \norm{\wvec{i}_t}^2 +
	\sum_{i=1}^k \norm{\uvec{i}_t}^2 \\
	& \leq \sum_{i=1}^k \norm{\wvec{i}_{t-1}}^2 +
	\sum_{i=1}^k \norm{\uvec{i}_{t-1}}^2 +
	2 \eta v y_t \left( \sum_{i=1}^k \inner{\wvec{i}_{t-1},\xx_t}\pti
	-\sum_{i=1}^k \inner{\uvec{i}_{t-1},\xx_t}\qti \right) +
	2k \eta^2 v^2 \norm{\xx_t}^2 \\
	& < \sum_{i=1}^k \norm{\wvec{i}_{t-1}}^2 +
	\sum_{i=1}^k \norm{\uvec{i}_{t-1}}^2 + 2 \eta + 2k \eta^2 v^2 =
	G(W_{t-1})^2 + 2 \eta + 2k \eta^2 v^2 
	\end{align*}
	where the second inequality follows since $y_t v\left( \sum_{i=1}^k \inner{\wvec{i}_{t-1},\xx_t}\pti
	-\sum_{i=1}^k \inner{\uvec{i}_{t-1},\xx_t}\qti \right) =  y_tN_{t-1}(\xx_t) < 1$. Using the above recursively, we obtain:
	\begin{equation}
	\label{G_equation}
	G(W_t)^2 \le G(W_0)^2 +  t(2k \eta^2v^2+2 \eta) 
	\end{equation}
	
	On the other hand,
	\begin{align*}
	F(W_t) & = \sum_{i=1}^k \inner{\wvec{i}_t,\ww^*} -
	\sum_{i=1}^k \inner{\uvec{i}_t,\ww^*} \\
	& = \sum_{i=1}^k \inner{\wvec{i}_{t-1},\ww^*} -
	\sum_{i=1}^k \inner{\uvec{i}_{t-1},\ww^*}
	+ \eta v \sum_{i=1}^k \inner{y_t \xx_t,\ww^*} \pti +
	\eta v \sum_{i=1}^k \inner{y_t \xx_t,\ww^*} \qti \\
	& \ge \sum_{i=1}^k \inner{\wvec{i}_{t-1},\ww^*} -
	\sum_{i=1}^k \inner{\uvec{i}_{t-1},\ww^*} + 2k \eta v\alpha = 
	F(W_{t-1}) + 2k \eta v\alpha
	\end{align*}
	where the inequality follows since $ \inner{y_t \xx_t,\ww^*} \geq 1 $. This implies that 
	\begin{equation}
	\label{F_equation}
	F(W_t) \ge F(W_0) + 2k \eta v\alpha t 
	\end{equation}
	
	By combining equations \eqref{cs-eq}, \eqref{G_equation} and \eqref{F_equation} we get,
	\begin{align*}
	-G(W_0) \norm{\vec{W}^*} + 2k \eta v \alpha t &
	\le F(W_0) + 2k \eta v \alpha t \le F(W_t) \le \norm{\vec{W}^*} G(W_t)  \\
	& \le \norm{\vec{W}^*} \sqrt{G(W_0)^2 + t(2k \eta^2 v^2+2 \eta)}
	\end{align*}
	Using $\sqrt{a + b} \le \sqrt{a} + \sqrt{b}$ the above implies,
	$$-G(W_0) \norm{\vec{W}^*} + 2k \eta v\alpha t \leq \norm{\vec{W}^*} G(W_0) +  \norm{\vec{W}^*} \sqrt{t}\sqrt{2k \eta^2v^2+2 \eta} $$
	
	Since
	$\norm{\wvec{i}_0}, \norm{\uvec{i}_0} \le R$ we have $G(W_0) \le \sqrt{2k}R$. Noting that 
	$  \norm{\vec{W}^*} = \sqrt{2k}  \norm{\ww^*}$ we get,
	$$a t \leq b\sqrt{t} + c$$ where $a = 2k \eta v\alpha $, $b = \sqrt{(4k^2 \eta^2v^2+4 \eta k)}\norm{\ww^*}$ and $c = 4kR\norm{\ww^*}$. By inspecting the roots of the parabola $P(x)=x^2-\frac{b}{a}x-\frac{c}{a}$ we conclude that
	\begin{equation}
	\begin{split}
	t \leq \Big(\frac{b}{a}\Big)^2 + \sqrt{\frac{c}{a}}\frac{b}{a} + \frac{c}{a} &= \frac{(4k^2 \eta^2v^2+4 \eta k)\norm{\ww^*}^2}{4k^2\eta^2v^2 \alpha^2} + \frac{\sqrt{(4k^2 \eta^2 v^2+4 \eta k)}\norm{\ww^*}}{2k \eta v \alpha}\sqrt{\frac{2R\norm{\ww^*}}{\eta v\alpha}} + \frac{2R\norm{\ww^*}}{\eta v\alpha} \\ &= \frac{\norm{\ww^*}^2}{\alpha^2}   + \frac{\norm{\ww^*}^2}{k\eta v^2\alpha^2} + \frac{\sqrt{R(8k^2 \eta^2 v^2+ 8\eta k)}{\norm{\ww^*}}^{1.5}}{2k (\eta v\alpha)^{1.5}} +\frac{2R\norm{\ww^*}}{\eta v\alpha} = M_k
	\end{split}
	\end{equation}

\subsubsection{Proof of Corollary \ref{cor:convergence}}
\label{sec:cor_convergence_proof}

Since $\frac{R}{v} = 1$, we have by Theorem \ref{thm:convergence} and the inequality $\sqrt{a + b} \le \sqrt{a} + \sqrt{b}$,

\begin{equation}
\begin{split}
M_k &= \frac{\norm{\ww^*}^2}{\alpha^2} + O\Bigg(\frac{\norm{\ww^*}^{2}}{\eta}\Bigg) + O\Bigg(\frac{\norm{\ww^*}^{1.5}}{\sqrt{\eta}}\Bigg) +  O\Bigg(\frac{\norm{\ww^*}^{1.5}}{\eta}\Bigg) + O\Bigg(\frac{\norm{\ww^*}}{\eta}\Bigg) \\ &= \frac{\norm{\ww^*}^2}{\alpha^2} + O\Bigg(\frac{\norm{\ww^*}^2}{\min\{\eta,\sqrt{\eta}\}}\Bigg) \text{.}
\end{split}
\end{equation}

\subsubsection{Proof of Theorem \ref{thm:lower_bound}}
\label{sec:proof_lower_bound}
We will prove a more general theorem. Theorem \ref{thm:lower_bound} follows by setting $R=v=\frac{1}{\sqrt{2k}}$.

\begin{theorem}
	\label{thm:lower_bound_general}
	For any $d$ there exists a sequence of linearly separable points on which SGD will make at least $$\max\Big\{\min\big\{B_1, B_2 \big\}, \norm{\ww^*}^2\Big\}$$ updates, where $$B_1=\frac{R\norm{\ww^*}}{\eta v \alpha } + \min\left\lbrace\frac{\norm{\ww^*}^2}{2\eta kv^2} - \alpha\norm{\ww^*},0\right\rbrace$$ and $$B_2 = \frac{R\norm{\ww^*}}{\eta v } + \min\left\lbrace\frac{\norm{\ww^*}^2}{2\alpha^2 \eta kv^2} - \frac{\norm{\ww^*}}{\alpha}, 0\right\rbrace$$ 
\end{theorem}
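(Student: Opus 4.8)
The plan is to prove the two pieces of the bound with two different hard sequences and then keep whichever is larger: an \emph{orthogonal} construction that forces $\|\ww^*\|^2$ non-zero updates no matter how large $\eta$ is, and a \emph{single-point} construction that forces $\min\{B_1,B_2\}$ updates and is the binding constraint when $\eta$ is small.

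\emph{The $\|\ww^*\|^2$ bound.} Take the inputs to be (a prefix of) the standard basis, $\elm_1,\dots,\elm_n$ with $n\le d$, all labelled $+1$, and $\ww^*=\sum_{j\le n}\elm_j$, so that $y_j\inner{\ww^*,\elm_j}=1$ and $\|\ww^*\|^2=n$. Since the inputs are orthonormal, presenting $\elm_j$ changes only the $j$-th coordinates of the rows of $W$, so the $j$-th coordinate of every weight vector is untouched until $\elm_j$ is first seen; hence $\net(\elm_j)$ equals its value at initialization, namely $v\sum_i\sigma([\wvec{i}_0]_j)-v\sum_i\sigma([\uvec{i}_0]_j)$, and using $|[\wvec{i}_0]_j|,|[\uvec{i}_0]_j|\le R$ together with the initialization \eqref{eq:nn_init} this is at most $vkR(1+\alpha)<1$. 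Therefore each $\elm_j$ triggers at least one non-zero update the first time it appears, so SGD performs at least $n=\|\ww^*\|^2$ updates. (When $\eta$ is small each coordinate in fact requires many updates, but $\|\ww^*\|^2$ already suffices for this piece.)

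\emph{The $\min\{B_1,B_2\}$ bound.} Fix a unit vector $\hat\xx$, use the single repeated point $\xx=\rho\hat\xx$ with $\rho=1/\|\ww^*\|$ and (for concreteness) label $+1$, so $\ww^*=(1/\rho)\hat\xx$ realizes margin exactly $1$; the label $-1$ is symmetric. I would track the scalar pre-activations $a_i^{(t)}=\inner{\wvec{i}_t,\xx}$ and $b_i^{(t)}=\inner{\uvec{i}_t,\xx}$: while the step is non-zero (i.e. $\net(\xx)<1$), $a_i$ increases by $\eta v\sigma'(a_i^{(t-1)})\rho^2$ and $b_i$ decreases by $\eta v\sigma'(b_i^{(t-1)})\rho^2$, and $\net(\xx)=v\sum_i\bigl(\sigma(a_i^{(t)})-\sigma(b_i^{(t)})\bigr)$ is what must reach $1$. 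Choosing $W_0$ with rows of norm $\le R$ so that all these pre-activations start at $-R\rho$ makes $\net(\xx)=0$ initially and creates an initial ``slow phase'' in which every $a_i,b_i$ is negative: there each $a_i$ moves by only $\eta v\alpha\rho^2$ per step, so the phase lasts at least $R\rho/(\eta v\alpha\rho^2)=R\|\ww^*\|/(\eta v\alpha)$ steps, and during it each $\sigma$-value changes by at most $\eta v\alpha^2\rho^2$, so $\net(\xx)$ grows by at most $2k\eta v^2\alpha^2\rho^2$ per step, whence at least $1/(2k\eta v^2\alpha^2\rho^2)=\|\ww^*\|^2/(2\eta kv^2\alpha^2)$ updates are needed before the margin can be met. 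In both possible scenarios (the margin is met inside the slow phase, or the slow phase ends first) the number of updates is at least the minimum of these two quantities. Running the same argument with label $-1$ — where the ``hurting'' neurons must instead cross the kink from the slope-$1$ side — gives the analogous estimate with $\alpha$ removed from the traversal rate, and yields $B_2$; since for a given $W_0$ we cannot choose which of the two sign patterns we are in, we keep $\min\{B_1,B_2\}$. Turning ``the minimum of the two quantities'' into the precise $B_1,B_2$ — in particular, bounding the extra progress made by neurons that are not in the slow phase, which may satisfy the point a little early — is exactly what produces the $\min\{\cdot,0\}$ correction terms, and finally $\max$ with $\|\ww^*\|^2$ combines the two constructions.

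The step I expect to be the main obstacle is this last conversion in the single-point case: once several neurons may lie in different linear pieces of the Leaky ReLU and play opposing roles through the $\pm v$ output weights, one must track, step by step, both how slowly the ``slow'' neurons leave their piece and how fast the remaining neurons can drive $\net(\xx)$ upward, and balance these into a clean, sign-robust lower bound. This bookkeeping is where the $B_1$/$B_2$ dichotomy and the $\min\{\cdot,0\}$ corrections genuinely come from; the orthogonal construction and the counting inside it are comparatively routine.
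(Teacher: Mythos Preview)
Your plan diverges from the paper's proof in a way that stems from a misreading of where $\min\{B_1,B_2\}$ comes from, and this misreading is also where your argument breaks.

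The paper uses a \emph{single} construction for all three pieces of the bound: the $d$ standard basis vectors with label $+1$, together with the \emph{anti-symmetric} initialization $\wvec{i}_0=-\tfrac{R}{\sqrt d}\mathbf 1$ and $\uvec{i}_0=+\tfrac{R}{\sqrt d}\mathbf 1$. With this choice all $\wvec{i}_t$ coincide and all $\uvec{i}_t$ coincide, so one tracks two scalars, $\inner{\ww_t,\elm_d}$ (starts negative) and $\inner{\uu_t,\elm_d}$ (starts positive). Convergence requires $N_{\mw_t}(\elm_d)\ge 1$, and the paper observes this forces the disjunction
\[
\inner{\ww_t,\elm_d}\ge \tfrac{1}{2kv}\quad\text{or}\quad \inner{\uu_t,\elm_d}\le -\tfrac{1}{2\alpha kv}.
\]
$B_1$ is the number of iterations until the first inequality can hold (the $\ww$–coordinate crosses the kink \emph{slow then fast}), and $B_2$ is the number until the second can hold (the $\uu$–coordinate crosses \emph{fast then slow}). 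The $\min$ is there precisely because the disjunction means SGD might satisfy whichever condition is reached first. The $\|\ww^*\|^2=d$ piece and the $\min\{B_1,B_2\}$ piece hold on the \emph{same} sequence, which is why they combine via $\max$.

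Your account of the $\min$ is different and incorrect: you obtain two separate bounds from two separate label choices and then write ``since for a given $W_0$ we cannot choose which of the two sign patterns we are in, we keep $\min\{B_1,B_2\}$.'' But you \emph{do} get to choose $W_0$ (and the label), so by your own logic you should be taking a $\max$ over the two label cases, not a $\min$. More importantly, your single-point construction with the symmetric initialization $a_i^{(0)}=b_i^{(0)}=-R\rho$ never produces the $B_2$ phenomenon at all: the $\uu$-preactivations start negative and only move further negative, so they never cross the kink from the slope-$1$ side. Your slow-phase analysis therefore yields a bound of the form $\min\{R\|\ww^*\|/(\eta v\alpha),\,\|\ww^*\|^2/(2k\eta v^2\alpha^2)\}$, which is not $B_1$ (note the extra $\alpha^2$) and has no mechanism that would turn into $B_2$. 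The $\min\{\cdot,0\}$ corrections in $B_1,B_2$ are also not ``bookkeeping for neurons in different pieces''; they simply record that the post-kink phase may have nonpositive length (the overshoot at the crossing can already exceed the target).

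In short: move to the paper's anti-symmetric initialization on the orthogonal sequence, track the $\ww$- and $\uu$-coordinates separately, and read off $B_1$, $B_2$ and the $d=\|\ww^*\|^2$ bound from that one construction. Your two-construction split and the label-flip route to $B_2$ do not get there.
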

\begin{proof}
Define a sequence $\ms$ of size $d$, $$(\elm_1,1),(\elm_2,1),...,(\elm_d,1)$$ where $\{\elm_i\}$ is the standard basis of $\reals^d$ and let $\ww^*=(1,1,...,1) \in \reals^d$. Note that $d=\norm{\ww^*}^2$ and $\inner{\ww^*,\elm_i} \geq 1$ for all $1 \leq i \leq d$. We will consider the case where SGD runs on a sequence of examples which consists of multiple copies of $\ms$ one after the other.

Assume SGD is initialized with $$\wvec{i}_0 = -\sum_{j=1}^{d}{\frac{R}{\sqrt{d}}\elm_j}$$ $$\uvec{i}_0 = \sum_{j=1}^{d}{\frac{R}{\sqrt{d}}\elm_j}$$

for all $1 \leq i \leq k$. Note that $\norm{\wvec{i}_0}, \norm{\uvec{i}_0} \le R$ for all $1 \leq i \leq k$. 

Since $\wvec{i}_0=\wvec{j}_0$ and $\uvec{i}_0=\uvec{j}_0$ for all $i\neq j$, we have by induction that $\wvec{i}_t=\wvec{j}_t$ and $\uvec{i}_t=\uvec{j}_t$ for all $i\neq j$ and $t > 0$. Hence, we will denote $\ww_t = \wvec{i}_t$ and $\uu_t = \uvec{i}_t$ for all $1 \leq i \leq d$ and $t>0$. Then for all $1 \leq j \leq d$ we have $N_{\mw_t}(\elm_j)= kv\sigma'(\inner{\ww_t,\elm_j})\inner{\ww_t,\elm_j} - kv\sigma'(\inner{\uu_t,\elm_j})\inner{\uu_t,\elm_j}$.

Since at the global minumum $\mn_{W_t,\vv}(\elm_j) \geq 1$ for all $1 \leq j \leq d$, it follows that a necessary condition for convergence to a global minimum is that there exists an iteration $t$ in which either $kv\sigma'(\inner{\ww_t,\elm_d})\inner{\ww_t,\elm_d} \geq \frac{1}{2}$ or $-kv\sigma'(\inner{\uu_t,\elm_d})\inner{\uu_t,\elm_d} \geq \frac{1}{2}$. Equivalently, either $\inner{\ww_t,\elm_d} \geq \frac{1}{2kv}$ or $\inner{\uu_t,\elm_d} \leq - \frac{1}{2\alpha kv}$.

Since $\inner{\ww_0,\elm_d} = -\frac{R}{\sqrt{d}}$, then by the gradient updates (\eqref{eq:gradient_updates}) it follows that after at least $\frac{R}{\eta v \alpha \sqrt{d}}$ copies of $\ms$, or equivalently, after at least $\frac{Rd}{\eta v \alpha \sqrt{d}}$ iterations we will have $0 \leq \inner{\ww_t,\elm_d} \leq \eta v \alpha$. Then, after at least $\frac{d\min\{\frac{1}{2kv} - \eta v \alpha,0\}}{\eta v}$ iterations we have $\inner{\ww_t,\elm_d} \geq \frac{1}{2kv}$. Thus, in total, after at least $ \frac{R\norm{\ww^*}}{\eta v \alpha } + \min\{\frac{\norm{\ww^*}^2}{2\eta kv^2} - \alpha\norm{\ww^*}^2, 0\}$ iterations, we have $\inner{\ww_t,\elm_d} \geq \frac{1}{2kv}$.

By the same reasoning, we have $\inner{\uu_t,\elm_d} \leq -\frac{1}{2\alpha kv}$ after at least $ \frac{R\norm{\ww^*}}{\eta v } + \min\{\frac{\norm{\ww^*}^2}{2\alpha^2 \eta kv^2} - \frac{\norm{\ww^*}^2}{\alpha}, 0\}$ iterations. Finally, SGD must update on at least $d$ points in order to converge to the global minimum. The claim now follows.		
\end{proof}

\subsection{Missing Proofs for Section \ref{sec:generalization}}

\subsubsection{Proof of Theorem \ref{thm:leaky_generalization}}
\label{sec:proof_thm_generalization}
By Theorem 30.2 and Corollary 30.3 in \cite{shalev2014understanding}, for $n \geq 2c_k$ we have that with probability of at least $1-\delta$ over the choice of $S$ 
\begin{equation}
\label{eq:gen_bound_training_set_only}
L_{\mathcal{D}}(SGD_k(S,W_0)) \leq L_{V}(SGD_k(S,W_0)) + \sqrt{L_{V}(SGD_k(S,W_0)) \frac{4c_k\log{\frac{n}{\delta}}}{n}} + \frac{8c_k\log{\frac{n}{\delta}}}{n}
\end{equation}

The above result holds for a \textit{fixed} initialization $W_0$. We will show that the same result holds with high probability over $S$ \textit{and} $W_0$, where $W_0$ is chosen independently of $S$ and satisfies \eqref{eq:initialization}.
Define $\cB$ to be the event that the inequality \eqref{eq:gen_bound_training_set_only} does not hold. Then we know that $\prob_S(\cB | W_0) \leq \delta$ for any fixed initialization $W_0$. \footnote{This is where we use the independence assumption on $S$ and $W_0$ . In the proof of Theorem 30.2 in \cite{shalev2014understanding}, the hypothesis $h_I$ needs to be independent of $V$. Our independence assumption ensures that this holds.} Hence, by the law of total expectation, $$\prob_{S,W_0}(\cB) = \expect{W_0}{\prob_{S}(\cB|W_0)} \leq \delta$$

\ignore{
\subsubsection{Proof of Lemma \ref{lem:inductive_bias}}
\label{sec:proof_lemma_inductive_bias}

Notice that $f(t)$ is of the form $f(t) = \frac{-a + bt}{\sqrt{a^2 + ct}}$ where $a,b,c > 0$. We have, $$f'(t) = \frac{b\sqrt{a^2+ct} - \frac{c}{2\sqrt{a^2+ct}}(-a+bt)}{a^2+ct} = \frac{bct + 2a^2b +ac}{2(a^2+ct)^{\frac{3}{2}}}$$

and thus $f'(t) > 0$ for $t > 0$. Finally, $\lim_{t \rightarrow \infty}{f(t)} = \infty$ is immediate. 
}

\subsubsection{Proof of Theorem \ref{thm:expressiveness}}
\label{sec:proof_thm_expressiveness}
We can easily extend Theorem 8 in \citep{soudry2017exponentially} to hold for labels in $\{-1,1\}$. By the theorem we can construct networks $N_{\mw_1}$ and $N_{\mw_2}$ such that for all $i$:
\begin{enumerate}
	\item $N_{\mw_1}(\xx_i) = 1$ if $y_i=1$ and $N_{\mw_1}(\xx_i) = 0$ otherwise.
	\item $N_{\mw_2}(\xx_i) = 1$ if $y_i=-1$ and $N_{\mw_2}(\xx_i) = 0$ otherwise.
\end{enumerate}
Then $(N_{\mw_1}-N_{\mw_2})(\xx_i)=y_i$ and $N_{\mw_1}-N_{\mw_2} = \mathcal{N}_{\tilde{\mw}}$ for $\tilde{\mw} = (\tilde{W},\tilde{v})$ where $\tilde{W} \in \reals^{2k \times d}$ and $\tilde{\vv} = (\overbrace{\tilde{v} \dots \tilde{v}}^k, \overbrace{-\tilde{v} \dots -\tilde{v}}^k) \in \reals^{2k}$, $\tilde{v} > 0$.

\subsection{Missing Proofs for Section \ref{sec:relu}}

\subsubsection{Proof of Theorem \ref{thm:bad minima}}
\label{sec:proof_thm_bad_minima}
We first need the following lemma.

\begin{lemma}
	\label{lem:orthvec}
	There exists $\hat{\ww} \in \reals^d$ that satisfies the following:
	\begin{enumerate}
		\item There exists $\alpha > 0$ such that for each $(\xx,y) \in S$
		we have $|\inner{\xx,\hat{\ww}}| > \alpha$.
		\item $\#\{(\xx,y) \in S: ~ \inner{\hat{\ww},\xx} < 0 \} > \frac{1}{2} |S|$. \label{asmpt:setsize}
	\end{enumerate}
\end{lemma}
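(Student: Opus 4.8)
The plan is to build $\hat\ww$ as a small perturbation of the separator $\ww^*$, which already encodes most of the sign structure we need. The starting observation is that, since $y\inner{\ww^*,\xx}\ge 1$ and $y\in\{\pm 1\}$ for every $(\xx,y)\in S$, we have $\sign\inner{\ww^*,\xx}=y$ and $|\inner{\ww^*,\xx}|\ge 1$; in particular every $\xx$ occurring in $S$ is nonzero, so each $\xx^{\perp}:=\{\ww:\inner{\ww,\xx}=0\}$ is a proper hyperplane, and $P:=\reals^d\setminus\bigcup_{(\xx,y)\in S}\xx^{\perp}$ is open and dense (its complement is a finite union of hyperplanes, hence of Lebesgue measure zero). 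Any $\hat\ww\in P$ automatically satisfies condition 1, taking $\alpha$ to be anything strictly below the positive number $\min_{(\xx,y)\in S}|\inner{\hat\ww,\xx}|$.

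For condition 2 I would first obtain the non-strict version essentially for free. Pick $\ww'\in P$ with $\norm{\ww'-\ww^*}<\tfrac12$, possible since $P$ is dense. By Cauchy--Schwarz and $\norm{\xx}\le 1$ we get $|\inner{\ww'-\ww^*,\xx}|<\tfrac12$ for every $(\xx,y)\in S$, so $\inner{\ww',\xx}$ has the same sign as $\inner{\ww^*,\xx}$ and $|\inner{\ww',\xx}|>\tfrac12$; hence $\ww'$ satisfies condition 1 with $\alpha=\tfrac14$ and $\#\{(\xx,y)\in S:\inner{\ww',\xx}<0\}=\#\{(\xx,y)\in S:y=-1\}$. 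The same holds for $-\ww'$, with the count $\#\{(\xx,y)\in S:y=+1\}$. These two counts sum to $|S|$, so one of $\ww',-\ww'$ already places at least $\tfrac12|S|$ points on the strictly negative side. (An alternative route to the same point: take $\hat\ww$ uniform on the unit sphere; with probability one it lies in $P$, and by symmetry $\mathbb{E}\,\#\{(\xx,y)\in S:\inner{\hat\ww,\xx}<0\}=\tfrac12|S|$, so the count is $\ge\tfrac12|S|$ with positive probability.)

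It remains to upgrade to the strict inequality. If the two label classes do not each contain exactly $|S|/2$ points, then the larger of the two counts above already exceeds $\tfrac12|S|$, and we take $\hat\ww\in\{\ww',-\ww'\}$ accordingly. The balanced case is where I expect the real work: one must flip (at least) one of the $+1$-labelled points to the negative side without pushing any $-1$-labelled point across. A natural attempt is to rotate $\ww^*$ slightly toward $-\xx_j$ for a carefully chosen $(\xx_j,+1)\in S$, i.e.\ use $\ww^*-t\,\xx_j/\norm{\xx_j}^2$, which drives $\inner{\cdot,\xx_j}$ below $0$ once $t>\inner{\ww^*,\xx_j}$; the $-1$-labelled points sit at $\inner{\ww^*,\xx}\le-1$, a genuine safety margin, which one exploits to pick $j$ and $t$ so that none of them crosses over, and then one perturbs the result back into $P$ as before. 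This goes through cleanly, for instance, whenever some $+1$-labelled point has nonnegative inner product with every $-1$-labelled point, and more generally requires trading $t$ off against that margin. I would flag this flip-one-more-point step in the balanced case as the main obstacle, since it is exactly the place where the geometry of $S$ — and not just its separability — has to enter.
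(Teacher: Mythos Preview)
Your route is more elaborate than the paper's. The paper does not touch $\ww^*$ at all: it simply picks any $\hat\ww$ in the complement of the finite hyperplane union $V=\{\vv:\exists(\xx,y)\in S,\ \inner{\vv,\xx}=0\}$, takes $\alpha=\tfrac12\min_{(\xx,y)\in S}|\inner{\hat\ww,\xx}|$, and then observes that if $\hat\ww$ does not already satisfy condition~2 then $-\hat\ww$ does. Your perturbation of $\ww^*$ and the label-counting are correct, but they buy nothing beyond what this raw $\pm$ trick gives for an arbitrary $\hat\ww\notin V$; the anchoring to $\ww^*$ can be dropped.

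You are right, however, to flag the balanced case as a genuine obstacle, and in fact the paper's own proof has exactly the gap you identify: when the negative/positive split is $|S|/2$ each, swapping to $-\hat\ww$ still gives only $|S|/2$, not strictly more. Your proposed fix of ``flipping one additional point'' cannot succeed in general, because the lemma as stated is actually false. Take $S=\{(\xx,+1),(-\xx,-1)\}$ for a unit vector $\xx$ (linearly separable with $\ww^*=\xx$); for every $\hat\ww$ satisfying condition~1, exactly one of $\inner{\hat\ww,\xx}$ and $\inner{\hat\ww,-\xx}=-\inner{\hat\ww,\xx}$ is negative, so the count is $1$ and can never exceed $\tfrac12|S|=1$. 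The intended statement is almost certainly the non-strict $\ge\tfrac12|S|$ (with the downstream conclusion $\loss{S}>\tfrac12$ weakened to $\loss{S}\ge\tfrac12$), in which case both your argument and the paper's are complete without any extra step, and the balanced-case work is unnecessary.
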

\begin{proof}
	Consider the set $V = \{\vv \in \reals^d: ~ \exists_{(\xx,y) \in S} \inner{\vv,\xx} = 0\}$.
	Clearly, $V$ is a finite union of hyper-planes
	and therefore has measure zero, so there exists $\hat{\ww} \in \reals^d \setminus V$.
	Let $\beta = \min_{(x,y) \in S} \{ \left|\inner{\hat{\ww}, \xx} \right|\}$, and since $S$
	is finite we clearly have $\alpha > 0$. Finally, if 
	\[\#\{(\xx,y) \in S: ~ \inner{\hat{\ww},\xx} < 0 \} > \frac{1}{2} |S|\]
	we can choose $\hat{\ww}$ and $\alpha=\frac{\beta}{2}$ and we are done.
	Otherwise, choosing $-\hat{\ww}$ and $\alpha=\frac{\beta}{2}$ satisfies all the
	assumptions of the lemma.
\end{proof}

We are now ready to prove the theorem. Choose $\hat{\ww} \in \reals^d$ that satisfies the
assumptions in Lemma \ref{lem:orthvec}.
Now, let $c > \frac{\norm{\ww^*}}{\alpha}$, and let $\ww = c \hat{\ww} + \ww^*$
and $\uu = c \hat{\ww} -\ww^*$. Define
\[
W = [\overbrace{\ww \dots \ww}^k, \overbrace{\uu \dots \uu}^k]^\top \in \reals^{2k \times d}
\]
Let $(\xx,y) \in S$ be an arbitrary example. \\
If $\inner{\hat{\ww},\xx} > \alpha$, then
\begin{align*}
&\inner{\ww,\xx} = c \inner{\hat{\ww},\xx} + \inner{\ww^*,\xx} \geq c \alpha - \norm{\ww^*} > 0 \\
&\inner{\uu,\xx} = c \inner{\hat{\ww},\xx} - \inner{\ww^*,\xx} \geq c \alpha - \norm{\ww^*} > 0
\end{align*}
It follows that
\begin{align*}
\net(\xx) &=
\sum_1^k \sigma(\inner{\ww,\xx}) - \sum_1^k \sigma(\inner{\uu,\xx}) \\
&= \sum_1^k (c \inner{\hat{\ww},\xx} + \inner{\ww^*,\xx}) 
- \sum_1^k (c \inner{\hat{\ww},\xx} - \inner{\ww^*,\xx}) \\ &= 2k \inner{\ww^*,\xx}
\end{align*}
Therefore $y \net(\xx) > 1$, so we get zero loss for this example,
and therefore the gradient of the loss will also be zero. \\
If, on the other hand, $\inner{\hat{\ww},\xx} < -\alpha$, then
\begin{align*}
&\inner{\ww,\xx} = c \inner{\hat{\ww},\xx} + \inner{\ww^*,\xx} \leq -c \alpha + \norm{\ww^*} < 0 \\
&\inner{\uu,\xx} = c \inner{\hat{\ww},\xx} - \inner{\ww^*,\xx} \leq -c \alpha + \norm{\ww^*} < 0
\end{align*}
and therefore
\begin{align*}
\net(\xx) &=
\sum_1^k \sigma(\inner{\ww,\xx}) - \sum_1^k \sigma(\inner{\uu,\xx}) = 0 \text{.}
\end{align*}
In this case the loss on the example would be $\max\{1-y\net(\xx),0\} = 1$, but the gradient will also be zero. Along with assumption \ref{asmpt:setsize},
we would conclude that:
\[
\loss{S} > \frac{1}{2}, ~
\frac{\partial}{\partial W} \loss{S} = 0
\]
Notice that since all the inequalities are strong, the following holds for all
$W^\prime \in \reals^{2k \times d}$ that satisfies $\norm{W^\prime - W} < \epsilon$,
for a small enough $\epsilon > 0$. Therefore, $W \in \reals^{2k \times d}$ is
indeed a local minimum.

\subsubsection{Proof of Theorem \ref{thm:orth_convergence}}
\label{sec:proof_relu_orthogonal}
Denote $W_t = [\wvec{1}_t \dots \wvec{k}_t \uvec{1}_t \dots \uvec{k}_t]$ and define $K_t = \{\elm_j: ~ \forall_{i \in [k]} \inner{\wvec{i}_t, \elm_j} \le 0 \}$. We first prove the following lemma.
\begin{lemma}
	\label{lem:orth_minimum}
	For every $t$ we get $K_{t+1} = K_t$.
\end{lemma}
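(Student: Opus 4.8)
The plan is to reduce the claim to a per-coordinate observation, exploiting the fact that every training example $\elm_j$ is a standard basis vector, so an SGD update on $\elm_j$ only affects the $j$-th coordinate of each weight row. First I would dispose of the trivial case: if the step from $W_t$ to $W_{t+1}$ is a zero update, then $W_{t+1}=W_t$ and so $K_{t+1}=K_t$. Hence assume step $t+1$ is a non-zero update performed on some example $(\elm_j,1)$. Since $v=1$, $y=1$, and $\inner{\wvec{i}_t,\elm_j}=[\wvec{i}_t]_j$, the ReLU analogue of the update rule \eqref{eq:gradient_updates} reads, for every $i\in[k]$,
\[
\wvec{i}_{t+1} = \wvec{i}_t + \eta\,\sigma'\!\big([\wvec{i}_t]_j\big)\,\elm_j ,
\]
so that $[\wvec{i}_{t+1}]_j=[\wvec{i}_t]_j+\eta\,\sigma'([\wvec{i}_t]_j)$ while $[\wvec{i}_{t+1}]_{j'}=[\wvec{i}_t]_{j'}$ for all $j'\ne j$.

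From here I would argue two things. For a coordinate $j'\ne j$, we get $\inner{\wvec{i}_{t+1},\elm_{j'}}=\inner{\wvec{i}_t,\elm_{j'}}$ for every $i$, hence $\elm_{j'}\in K_{t+1}\iff \elm_{j'}\in K_t$. For the active coordinate $j$, I would distinguish the sign of $[\wvec{i}_t]_j$: if $[\wvec{i}_t]_j>0$ then $\sigma'([\wvec{i}_t]_j)=1$ and $[\wvec{i}_{t+1}]_j=[\wvec{i}_t]_j+\eta>0$; if $[\wvec{i}_t]_j<0$ then $\sigma'([\wvec{i}_t]_j)=0$ and $[\wvec{i}_{t+1}]_j=[\wvec{i}_t]_j<0$. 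In both cases the sign of $[\wvec{i}_{t+1}]_j$ matches that of $[\wvec{i}_t]_j$, for every $i$, so $\elm_j\in K_{t+1}\iff \elm_j\in K_t$. Combining the two cases yields $K_{t+1}=K_t$.

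The only delicate point is the degenerate case $[\wvec{i}_t]_j=0$, where $\sigma'(0)$ is ambiguous and a single step could in principle flip a coordinate from $0$ to $\eta>0$. I would handle this exactly as elsewhere in the paper, by invoking the continuous, bounded initialization assumed in Theorem \ref{thm:orth_convergence}: each $[\wvec{i}_t]_j$ equals $[\wvec{i}_0]_j$ plus a nonnegative integer multiple of $\eta$, and since $[\wvec{i}_0]_j$ has a continuous distribution, the event that $[\wvec{i}_t]_j=0$ for some $t,i,j$ is a countable union of measure-zero events and may be excluded. (Equivalently, the convention $\sigma'(0)=0$ makes the argument go through verbatim, since then a zero coordinate remains zero and stays in the "$\le 0$" category.) This edge case is essentially the only obstacle; the rest is the immediate observation that basis-vector examples decouple the SGD update across coordinates and that, within the active coordinate, the Leaky-ReLU-free dynamics either strictly increase a positive entry or leave a nonpositive one fixed.
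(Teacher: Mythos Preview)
Your proposal is correct and follows essentially the same approach as the paper: both arguments exploit orthogonality of the basis vectors to show that an update on $\elm_j$ leaves all other coordinates fixed, and then observe that for the active coordinate the ReLU update either increases a positive entry or leaves a nonpositive one unchanged. Your organization (per-neuron, per-coordinate sign preservation) is slightly cleaner than the paper's case split on $\elm_j\in K_t$ versus $\elm_j\notin K_t$, and you are more explicit about the $[\wvec{i}_t]_j=0$ boundary case, which the paper defers to the theorem's proof; but the underlying idea is identical.
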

\begin{proof}
	Let $\elm_j$ be the example seen in time $t$.
	If $N_{\mw_t} (\elm_j) \geq 1$ then there is no update and we are done. Otherwise,
	if $\elm_j \in K_t$ then for each $i \in [k]$ we have 
	$\frac{\partial}{\partial \wvec{i}_t} N_{\mw_t} (\elm_j) = 0$
	and therefore the update does not change the value of $\wvec{i}_t$,
	and thus $K_{t+1} = K_t$.
	If $\elm_j \notin K_t$ then there exists $i \in [k]$
	such that $\inner{\wvec{i}_t,\elm_j} > 0$. In that case,
	we update $\wvec{i}_{t+1} \leftarrow \wvec{i}_t + \eta \elm_j$.
	Now, note that
	\[
	\inner{\wvec{i}_{t+1},\elm_j} =
	\inner{\wvec{i}_t,\elm_j} + \eta \inner{\elm_j, \elm_j} > \inner{\wvec{i}_t,\elm_j} > 0
	\]
	and therefore $\elm_j \notin K_{t+1}$. Furthermore, for each $\elm_\ell$ where $\ell \neq j$, by
	the orthogonality of the vectors we know that for each $i \in [k]$ it holds that
	\[
	\inner{\wvec{i}_{t+1},\elm_\ell} =
	\inner{\wvec{i}_t,\elm_\ell} + \eta \inner{\elm_j, \elm_\ell} = \inner{\wvec{i}_t,\elm_\ell}
	\]
	Thus $\elm_\ell \in K_{t}$ if and only if $\elm_\ell \in K_{t+1}$ and this concludes the lemma.
\end{proof}

We can now prove the theorem. For each $j \in [d]$, by the symmetry of the initialization,
	with probability $\frac{1}{2}$ over the initialization of $\wvec{i}_0$,
	we get that $\inner{\wvec{i}_0,\elm_j} \le 0$. Since all $\ww_i$'s are initialized
	independently, we get that:
	\[
	P(\elm_j \in K_0) = P(\cap_{i\in[k]} \inner{\wvec{i}_0,\elm_j} \le 0) = 
	\prod_{i \in [k]} P(\inner{\wvec{i}_0, \elm_j} \le 0) = \frac{1}{2^k}
	\]
	Now, assuming $k \le \log_2 (\frac{d}{-\ln(\delta)})$, from the independence of the initialization of $\wvec{i}_0$'s coordinates
	we get
	\begin{align*}
	P(\cap_{j \in [d]}
	\elm_j \notin K_0)
	&= \prod_{j \in [d]} P(\elm_j \notin K_0) \\
	&= (1-\frac{1}{2^k})^d \le e^{-\frac{d}{2^k}} \le \delta
	\end{align*}
	Therefore, with probability at least $1-\delta$, there exists $j \in [k]$
	for which $\elm_j \in K_0$. By Lemma \ref{lem:orth_minimum}, this implies that
	for all $t \in \naturals$ we will get $\elm_j \in K_t$, and therefore
	$N_{\mw_t} (e_j) \le 0$. Since $\elm_j$ is labeled $1$,
	this implies that $\loss{S} > 0$.
	By the separability of the data, and by the convergence of the SGD algorithm,
	this implies that the algorithm converges to a stationary point that is
	not a global minimum. Note that convergence to a saddle point is
	possible only if we define $\sigma^{\prime}(0) = 0$, and for all
	$i \in [k]$ we have at the time of convergence $\inner{\wvec{i}_t,\elm_j} = 0$.
	This can only happen if $\inner{\wvec{i}_0, \elm_j} = \eta N$ for some
	$N \in \naturals$, which has probability zero over the initialization of
	$\wvec{i}_t$. Therefore, the convergence is almost surely to a non-global minimum point.\\
	
	On the other hand, assuming $k \ge \log_2(\frac{d}{\delta})$,
	using the union bound we get:
	\begin{align*}
	P(\cup_{j \in [d]}
	\elm_j \in K_0)
	& \le \sum_{j \in [d]}
	P(\elm_j \in K_0) \\
	& = \frac{d}{2^k} \le \delta
	\end{align*}
	So with probability at least $1 - \delta$, we get $K_0 = \emptyset$
	and by Lemma \ref{lem:orth_minimum} this means $K_t = \emptyset$ for all
	$t \in \naturals$.
	Now, if $e_j \notin K_t$ for all $t \in \naturals$,
	then there exists $i \in [k]$ such that 
	$\inner{\wvec{i}_t, e_j} > 0$ for all $t \in \naturals$.
	If after performing $T$ update iterations we have
	updated $N > \max\{\frac{C}{\eta},\frac{1}{\eta}\}$ times on $e_j$, then clearly:
	\begin{align*}
	& \inner{\wvec{i}_t, \elm_j} = \inner{\wvec{i}_0,\elm_j} + \sum_{t=0}^T \eta \inner{ \elm_j, \elm_j}
	\ge \inner{\wvec{i}_0, \elm_j} + N \eta > 1 \\
	& \forall_{ i \in [k]} ~s.t~ \inner{\uvec{i}_0, \elm_j} > 0, ~
	\inner{\uvec{i}_t, \elm_j} =
	\inner{\uvec{i}_0,\elm_j} - \sum_{t=0}^T \eta \inner{ \elm_j, \elm_j} \le C - N\eta \le 0
	\end{align*}
	and therefore $N_{\mw_t}(\elm_j) > 1$, which implies that
	$L_{\{(\elm_j,1)\}}(W_t) = 0$. From this, we can conclude
	that for each $j \in [d]$, we perform at most
	$\lceil\max\{\frac{C}{\eta},\frac{1}{\eta}\}\rceil$ update iterations on $\elm_j$ before reaching zero loss, and therefore we can perform at most
	$\lceil \max\{\frac{dC}{\eta},\frac{d}{\eta}\}\rceil$
	update iterations until convergence. Since we show that we never get stuck
	with zero gradient on an example with loss greater than zero,
	this means we converge to a global optimum after at most
	$\lceil \max\{\frac{dC}{\eta},\frac{d}{\eta}\}\rceil$ iterations.

\ignore{

\subsection{Analysis for $\eta \rightarrow \infty$}
\label{sec:eta_infinity}


In this section we make the assumption that for any $(\xx,y) \in S$ and for every $I \subseteq [n]$, $a_i \in \{\alpha,1\}$ for $1 \leq i \leq n$, we have $\big(\sum_{i \in I}{a_iy_i\xx_i} \big) \cdot \xx \neq 0$. We note that this assumption holds almost surely for a continuous distribution $\mathcal{D}$ over $\mx$. Indeed, in this case the event that $\xx_1 \cdot \xx_2 = 0$ for any two points $\xx_1,\xx_2 \in \mx$ that are chosen according to $\mathcal{D}$ is of measure zero. Therefore, in this case our assumption is a complement of a countable union of events of measure zero, and thus holds almost surely.

For simplicity assume that SGD is initialized with $v=1$. We first need the following lemma.
\begin{lem}
	\label{lem:eta_infinity}
	For a sufficiently large $\eta$, in every iteration $t$, there are two sequences $(a_1^{\ww},...,a_t^{\ww})$, $(b_1^{\ww},...,b_t^{\ww})$ where $a_j^{\ww},b_j^{\ww} \in \{\alpha,1\}$ such that for every $1 \leq i \leq k$ either $$\wvec{i}_t = \wvec{i}_0 + \eta\sum_{j=1}^{t}{a_j^{\ww}y_j\xx_j}$$ or $$\wvec{i}_t = \wvec{i}_0 + \eta\sum_{j=1}^{t}{b_j^{\ww}y_j\xx_j}$$
	Similarly, for every $1 \leq i \leq k$ there are two sequences $(a_1^{\uu},...,a_t^{\uu})$, $(b_1^{\uu},...,b_t^{\uu})$ where $a_j^{\uu},b_j^{\uu} \in \{\alpha,1\}$ such that either
	$$\uvec{i}_t = \uvec{i}_0 - \eta\sum_{j=1}^{t}{a_j^{\uu}y_j\xx_j}$$ or $$\uvec{i}_t = \uvec{i}_0 - \eta\sum_{j=1}^{t}{b_j^{\uu}y_j\xx_j}$$
\end{lem}

Before proving the lemma we will show why it implies that SGD computes a low complexity solution. Let $\mathcal{N}_k^{(t)}$ be the network obtained by SGD after $t$ iterations and denote $$A_{\ww} = \{i \mid \wvec{i}_t = \wvec{i}_0 + \eta\sum_{j=1}^{t}{a_j^{\ww}y_j\xx_j}\}$$ $$B_{\ww} = \{i \mid \wvec{i}_t = \wvec{i}_0 + \eta\sum_{j=1}^{t}{b_j^{\ww}y_j\xx_j}\}$$ $$A_{\uu} = \{i \mid \uvec{i}_t = \uvec{i}_0 - \eta\sum_{j=1}^{t}{a_j^{\uu}y_j\xx_j}\}$$ and $$B_{\uu} = \{i \mid \uvec{i}_t = \uvec{i}_0 - \eta\sum_{j=1}^{t}{b_j^{\uu}y_j\xx_j}\}$$

Consider a network with 4 hidden neurons (i.e., $k=2$), denoted by $\mathcal{N}_4$, such that  $$\wvec{1} = |A_{\ww}|\sum_{j=1}^{t}{a_j^{\ww}y_j\xx_j},\,\,\,\wvec{2} = |B_{\ww}|\sum_{j=1}^{t}{b_j^{\ww}y_j\xx_j}$$
 and  $$\uvec{1} =  -|A_{\uu}|\sum_{j=1}^{t}{a_j^{\uu}y_j\xx_j},\,\,\, \uvec{2} = - |B_{\uu}|\sum_{j=1}^{t}{b_j^{\uu}y_j\xx_j}$$

We will now show that with high probability, the networks give the same classification to a point $\xx \in \mx$, which concludes our claim.

\begin{prop}
	Let $\md$ be a continuous distribution over $\mx$. Then for a sufficiently large $\eta$, we have that with high probability with respect to $\md, $$$\sign(\mn_k^{(t)}(\xx))=\sign(\mn_4(\xx))$$
\end{prop}

\begin{proof}
We first show that for a sufficiently large $\eta$, with high probability we have
\begin{equation}
\label{eq:first_layer}
\begin{aligned}
\sigma'\Big(\inner{\wvec{i}_t, \xx} \Big) = \sigma'\Big(\inner{\wvec{1}, \xx} \Big)\,\,\,\forall i \in A_{\ww} \\
\sigma'\Big(\inner{\wvec{i}_t, \xx} \Big) = \sigma'\Big(\inner{\wvec{2}, \xx} \Big)\,\,\,\forall i \in B_{\ww} \\
\sigma'\Big(\inner{\uvec{i}_t, \xx} \Big) = \sigma'\Big(\inner{\uvec{1}, \xx} \Big)\,\,\,\forall i \in A_{\uu} \\
\sigma'\Big(\inner{\uvec{i}_t, \xx} \Big) = \sigma'\Big(\inner{\uvec{2}, \xx} \Big)\,\,\,\forall i \in B_{\uu}
\end{aligned}
\end{equation}

where $\sigma$ is the Leaky ReLU function. Without loss of generality we will show that for $i \in A_{\ww}$, 
\begin{equation}
\label{eq:first_layer_A_w}
\sigma'\Big(\inner{\wvec{i}_t, \xx} \Big) = \sigma'\Big(\inner{\wvec{1}, \xx} \Big)
\end{equation}
 All other equalities follow by the same proof and they all hold with high probability by the union bound.
  Let $\epsilon > 0$ and define $\delta > 0$ to satisfy $\prob_{\xx\sim\md}\Big(\big|\inner{\sum_{j=1}^{t}{a_j^{\ww}y_j\xx_j}, \xx}\big| > \delta\Big) > 1-\epsilon$.  For a sufficiently large $\eta$, $\eta\delta > R \geq \norm{\wvec{i}_0} \geq |\inner{\wvec{i}_0,\xx}|$ for every $\xx \in \mx$. Hence, $\prob_{\xx\sim\md}\Big(\big|\inner{\eta\sum_{j=1}^{t}{a_j^{\ww}y_j\xx_j}, \xx}\big| > |\inner{\wvec{i}_0,\xx}|\Big) > 1-\epsilon$, which implies that with probability at least $1-\epsilon$ it holds that 

$$\sigma'\Big(\inner{\wvec{i}_0 + \eta\sum_{j=1}^{t}{a_j^{\ww}y_j\xx_j}, \xx} \Big) = \sigma'\Big(\inner{\sum_{j=1}^{t}{a_j^{\ww}y_j\xx_j}, \xx} \Big)$$

Next, let $c_1^{\ww},c_2^{\ww},c_1^{\uu},c_2^{\uu} \in \{\alpha, 1\}$ and define the vector $$\tilde\vv = c_1^{\ww}\big(\sum_{i \in A_{\ww}}{\wvec{i}_0} + \eta\wvec{1}\big) + c_2^{\ww}\big(\sum_{i \in B_{\ww}}{\wvec{i}_0} + \eta\wvec{2}\big) - c_1^{\uu}\big(\sum_{i \in A_{\uu}}{\uvec{i}_0} + \eta\uvec{1}\big) - c_2^{\uu}\big(\sum_{i \in B_{\uu}}{\uvec{i}_0} + \eta\uvec{2}\big)$$
Then, as in the argument in the proof of \eqref{eq:first_layer_A_w}, for a sufficiently large $\eta$, we have with high probability for all $c_1^{\ww},c_2^{\ww},c_1^{\uu},c_2^{\uu} \in \{\alpha, 1\}$, 
\begin{equation}
\label{eq:second_layer}
\sign\big(\inner{\tilde\vv,\xx}\big) = \sign\big(\inner{c_1^{\ww}\wvec{1} + c_2^{\ww}\wvec{2} - c_1^{\uu}\uvec{1} - c_2^{\uu}\uvec{2},\xx}\big)
\end{equation}

Overall, by applying the union bound, equalities \eqref{eq:first_layer} and \eqref{eq:second_layer} hold with high probability. Hence, we get

\begin{equation}
\begin{split}
\sign(\mathcal{N}_k^{(t)}(\xx)) &= \sign\Bigg(\sum_{i \in A_{\ww}}\sigma'\Big(\inner{\wvec{i}_t, \xx} \Big)\inner{\wvec{i}_t, \xx} + \sum_{i \in B_{\ww}}\sigma'\Big(\inner{\wvec{i}_t, \xx} \Big)\inner{\wvec{i}_t, \xx} \\ &-\sum_{i \in A_{\uu}}\sigma'\Big(\inner{\uvec{i}_t, \xx} \Big)\inner{\uvec{i}_t, \xx} - \sum_{i \in B_{\uu}}\sigma'\Big(\inner{\uvec{i}_t, \xx} \Big)\inner{\uvec{i}_t, \xx}\Bigg) \\ &= \sign\Bigg(\inner{\sigma'\big(\inner{\wvec{1}, \xx} \big)\sum_{i \in A_{\ww}}\wvec{i}_t, \xx} + \inner{\sigma'\big(\inner{\wvec{2}, \xx} \big)\sum_{i \in B_{\ww}}\wvec{i}_t, \xx} \\ &-\inner{\sigma'\big(\inner{\uvec{1}, \xx} \big)\sum_{i \in A_{\uu}}\uvec{i}_t, \xx} - \inner{\sigma'\big(\inner{\uvec{2}, \xx} \big)\sum_{i \in B_{\uu}}\uvec{i}_t, \xx}\Bigg) \\ &= \sign\Big(\inner{\tilde{\vv},\xx}\Big) 
\end{split}
\end{equation}

where the second equality follows by \eqref{eq:first_layer} and in the last equality $\tilde{\vv}$ is defined with $c_1^{\ww}=\sigma'\Big(\inner{\wvec{1}, \xx} \Big),c_2^{\ww}=\sigma'\Big(\inner{\wvec{2}, \xx} \Big),c_1^{\uu}=\sigma'\Big(\inner{\uvec{1}, \xx} \Big)$ and $c_2^{\uu}=\sigma'\Big(\inner{\uvec{2}, \xx} \Big)$. Therefore, by \eqref{eq:second_layer} we can conclude that for a sufficiently large $\eta$ we have with high probability,

\begin{equation}
\begin{split}
\sign(\mathcal{N}_k^{(t)}(\xx)) &= \sign\Big(\inner{\tilde{\vv},\xx}\Big) \\ &= \sign\Bigg(\inner{\sigma'\Big(\inner{\wvec{1}, \xx} \Big)\wvec{1} + \sigma'\Big(\inner{\wvec{2}, \xx} \Big)\wvec{2} \\ &- \sigma'\Big(\inner{\uvec{1}, \xx} \Big)\uvec{1} - \sigma'\Big(\inner{\uvec{2}, \xx} \Big)\uvec{2},\xx}\Bigg) \\ &= \sign(\mathcal{N}_{4}(\xx)  )
\end{split}
\end{equation}

as desired.

\end{proof}

\renewcommand*{\proofname}{Proof of Lemma \ref{lem:eta_infinity}}
\begin{proof}
	We prove this by induction on $t$.
\end{proof}

\subsection{Proof of Lemma \ref{lem:inductive_bias}}

Notice that $f(t)$ is of the form $f(t) = \frac{-a + bt}{\sqrt{a^2 + ct}}$ where $a,b,c > 0$. We have, $$f'(t) = \frac{b\sqrt{a^2+ct} - \frac{c}{2\sqrt{a^2+ct}}(-a+bt)}{a^2+ct} = \frac{bct + 2a^2b +ac}{2(a^2+ct)^{\frac{3}{2}}}$$

and thus $f'(t) > 0$ for $t > 0$. Finally, $\lim_{t \rightarrow \infty}{f(t)} = \infty$ is immediate. }

\end{document}